\documentclass[11pt]{article}
\usepackage{algorithm}
\usepackage{algpseudocode}
\usepackage{booktabs}
\usepackage{multirow}
\usepackage{graphicx}
\usepackage{svg}
\usepackage{wrapfig}
\usepackage[margin=1in]{geometry}
\usepackage{amsmath}
\usepackage{caption}
\usepackage{subcaption}
\usepackage{amssymb}
\usepackage{amsthm}
\usepackage{caption}
\usepackage{algorithm}
\usepackage{algpseudocode}
\newtheorem{theorem}{Theorem}
\newtheorem{lemma}{Lemma}

\usepackage{xcolor}
\usepackage[numbers,sort&compress]{natbib}

\usepackage{authblk}

\usepackage{enumitem}
\usepackage{float} 
\AtBeginDocument{%
  }

\algrenewcommand{\algorithmiccomment}[1]{\hfill\textcolor{gray}{\(\triangleright\) \textit{#1}}}
\author[1]{Haosen Li$^{*}$}
\author[1]{Wenshuo Chen$^{*, \ddagger}$}
\author[2]{Lei Wang}
\author[1]{Shaofeng Liang}
\author[1]{Haozhe Jia}
\author[1]{Yutao Yue$^{\dagger}$}

\affil[1]{The Hong Kong University of Science and Technology (Guangzhou)}
\affil[2]{Griffith University \& Data61/CSIRO}
\affil[]{\small
$^{*}$Equal contribution.\quad
$^{\ddagger}$ Project Leader \quad
$^{\dagger}$Corresponding author\par
yutaoyue@hkust-gz.edu.cn
}

\date{}

\begin{document}
\title{Oracle Noise: Faster Semantic Spherical Alignment for Interpretable Latent Optimization}

\maketitle
\vspace{-1cm}
\begin{figure}[H]
  \centering
  \includegraphics[width=0.9\textwidth]{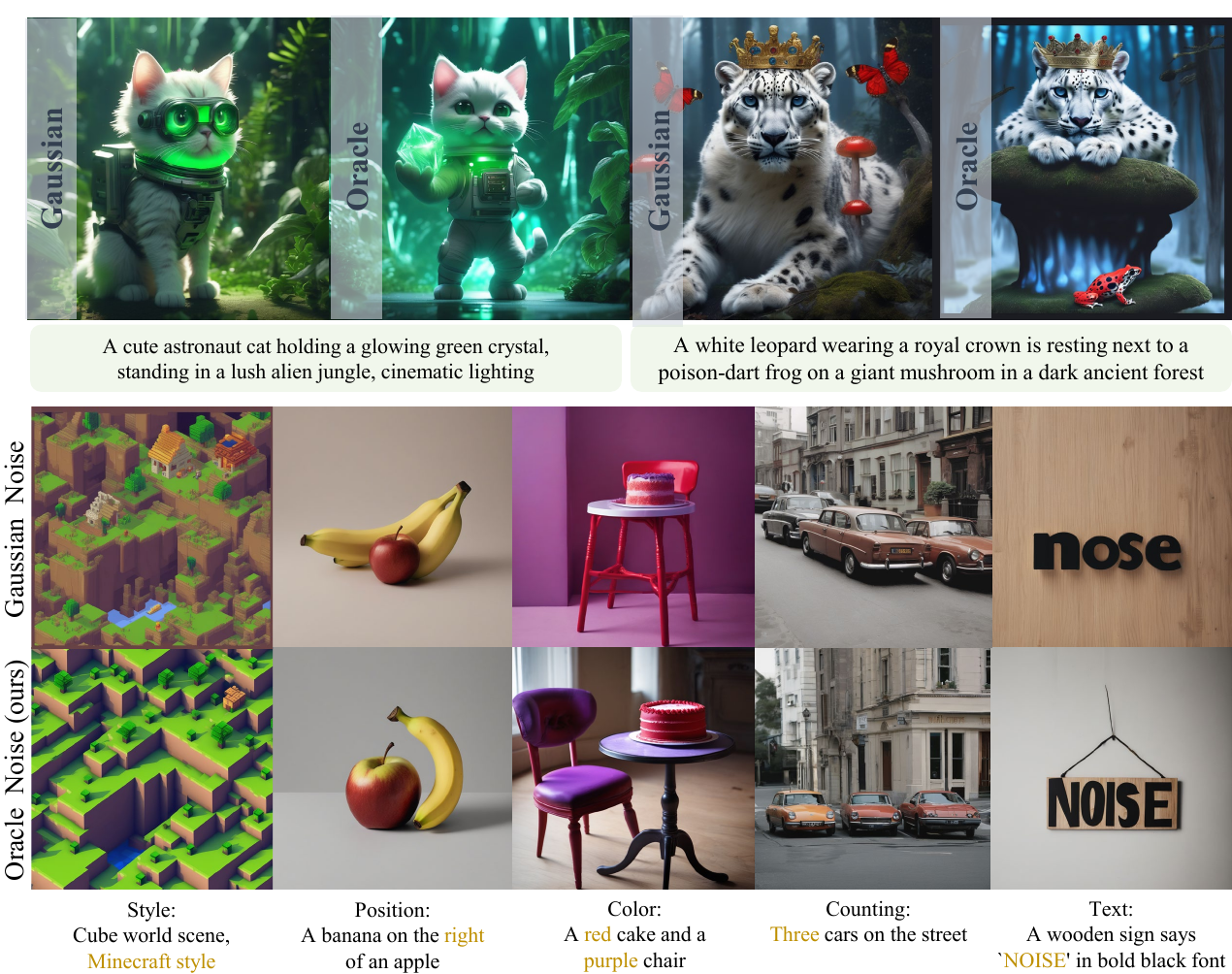}
  \caption{\textbf{Qualitative comparison of Oracle Noise vs. Gaussian Noise.} \textbf{Top:} For complex prompts, our method accurately renders intricate details and multiple object compositions where the baseline struggles. \textbf{Bottom:} Our method demonstrates superior detailed control across five key attributes: style adherence, spatial positioning, multiple color binding, accurate counting, and explicit text rendering. Overall, Oracle Noise significantly improves prompt adherence and generation fidelity.}
  \label{fig:teaser}
\end{figure}

\begin{abstract}

Text-to-image diffusion models have achieved remarkable generative capabilities, yet accurately aligning complex textual prompts with synthesized layouts remains an ongoing challenge. In these models, the initial Gaussian noise acts as a critical structural seed dictating the macroscopic layout. Recent online optimization and search methods attempt to refine this noise to enhance text-image alignment. However, relying on unconstrained Euclidean gradient ascent mathematically inflates the latent norm and destroys the standard Gaussian prior, causing severe visual artifacts like color over-saturation. Furthermore, these methods suffer from inefficient semantic routing and easily fall into the ``reward hacking'' trap of external proxy models. To address these intertwined bottlenecks, we propose \textbf{Oracle Noise}, a zero-shot framework reframing noise initialization as semantic-driven optimization strictly confined to a Riemannian hypersphere. Instead of relying on complex external parsers, we directly identify the most impactful structural words in the prompt to efficiently route optimization energy. By updating the noise strictly along a spherical path, we mathematically preserve the original Gaussian distribution. This geometric constraint eliminates norm inflation and unlocks aggressive step sizes for rapid convergence. Extensive experiments demonstrate that Oracle Noise significantly accelerates semantic alignment and achieves superior aesthetics without black-box models. It completely mitigates Euclidean-induced degradation, establishing state-of-the-art performance across human preference metrics (e.g., HPSv2, ImageReward), semantic alignment (CLIP Score), and sample diversity, all within a strict 2-second optimization budget.

\end{abstract}

\section{Introduction}
\label{sec:intro}

Diffusion models ~\cite{ho2020denoisingdiffusionprobabilisticmodels, song2021scorebasedgenerativemodelingstochastic, song2022denoisingdiffusionimplicitmodels, lipman2023flowmatchinggenerativemodeling, liu2022flowstraightfastlearning, rombach2022highresolutionimagesynthesislatent, podell2023sdxlimprovinglatentdiffusion, chen_2024, Chen_2025, chen2025polarisprojectionorthogonalsquaresrobust, yuan2025coemogensemanticallycoherentscalableemotional, ning2025dctdiffintriguingpropertiesimage, jia2025physicsinformedrepresentationalignmentsparse}
have fundamentally transformed the landscape of visual synthesis, offering an unprecedented medium to materialize human imagination into high-fidelity imagery. At the algorithmic core of these models lies the process of sculpting order out of absolute chaos, where every generation trajectory begins with a blank canvas of pure randomness. Recently, it has become a consensus within the community that, far from being a mere algorithmic placeholder, this initial noise acts as the critical structural seed dictating the macroscopic layout, entity placement, and eventual semantic composition of the synthesized image \cite{tang2024zerothorderoptimizationmeetshuman, 10657632, ahn2024noiseworthdiffusionguidance, huang2024bluenoisediffusionmodels, zhou2025goldennoisediffusionmodels, ma2025inferencetimescalingdiffusionmodels, qi2024noisescreatedequallydiffusionnoise, eyring2024renoenhancingonesteptexttoimage, wang2025silentassistantnoisequeryimplicit, jia2026antitheticnoisediffusionmodels} . Consequently, ensuring that this foundational canvas perfectly resonates with the user's textual intent before the generation begins is a crucial step for achieving flawless text-to-image alignment.

In modern Latent Diffusion Models (LDMs) \cite{rombach2022highresolutionimagesynthesislatent}, this initial canvas takes the mathematical form of a high-dimensional latent variable drawn from a standard Gaussian prior \cite{ho2020denoisingdiffusionprobabilisticmodels} . While earlier online optimization and search methods often focused on adjusting intermediate generative trajectories or text embeddings, recent works have shifted to directly targeting this initial noise \cite{ma2025inferencetimescalingdiffusionmodels, 10657632, qi2024noisescreatedequallydiffusionnoise, eyring2024renoenhancingonesteptexttoimage}, attempting to proactively refine it prior to the reverse diffusion process. Despite their conceptual potential, these noise-centric approaches remain severely bottlenecked by fundamental and intertwined limitations. In practice, existing methods predominantly rely on gradient ascent within an unconstrained Euclidean space. However, in high-dimensional spaces, Gaussian distributions are densely concentrated on a hyperspherical shell \cite{davidson2022hypersphericalvariationalautoencoders,dosi2025harmonizinggeometryuncertaintydiffusion, Vershynin_2018}. Consequently, Euclidean optimization inherently inflates the vector norm of the latent representation, causing it to deviate from its native prior. This geometric shift forces the latent variable into out-of-distribution regions, empirically manifesting as structural degradation and color over-saturation. 

Compounding this geometric degradation is a fundamentally flawed text-attention paradigm in current semantic routing. We observe that existing approaches treat every textual token equally, allocating the exact same optimization bandwidth to non-informative functional words as they do to core semantic entities. This strictly equal weighting disperses the model's generation capacity, readily leading to severe semantic misalignment or even completely erroneous visual layouts~\cite{10657632}. Furthermore, correcting this dispersion necessitates numerous computationally heavy backpropagation steps, substantially slowing convergence. To mitigate the resulting drop in generation quality, recent pipelines often incorporate external human-preference reward models as auxiliary objectives~\cite{eyring2024renoenhancingonesteptexttoimage}. While occasionally effective, relying on these proxy models inevitably increases memory consumption and introduces a critical vulnerability to ``reward hacking''~\cite{eyring2024renoenhancingonesteptexttoimage, tang2024zerothorderoptimizationmeetshuman}. The optimization trajectory learns to exploit statistical biases in the proxy metrics rather than enhancing true visual fidelity, often leading to mode collapse and stripping the generative process of its interpretability.

To address these bottlenecks at their root, we propose \textbf{Oracle Noise Optimization}, a zero-shot, prior-preserving, and self-contained framework. We re-conceptualize the noise initialization problem by shifting from a flawed Euclidean search to a structurally rigorous optimization strictly confined to a Riemannian manifold. Specifically, our framework introduces a novel token weighting mechanism based on \textit{representational collapse}. Ideally, optimization should route energy strictly toward the tokens the user cares about most; however, obtaining explicit ground truth for such subjective preference is intractable at inference time. Recent insights from attention-based learning \cite{li2026reinforcedattentionlearning} reveal that superior generative alignment intrinsically stems from correct internal attention allocation, rather than superficial output optimization which easily triggers reward hacking. Motivated by this, we propose an intrinsic alternative: forcing the model to focus on tokens that mathematically drive the generation. By measuring the semantic shift in the native text embeddings when a token is masked, we unsupervisedly isolate the ``load-bearing'' structural words that possess the highest intrinsic generative value, completely bypassing the need for external syntactic parsers or proxy reward models. 

Subsequently, to eliminate geometric degradation, we formulate the rigorous mathematical proof of the Gaussian Annulus Theorem ~\cite{Vershynin_2018} . Based on this theoretical foundation, we execute a \textit{spherical geodesic update}: by orthogonally projecting the raw gradient onto the tangent space and navigating strictly along the hypersphere, our method mathematically guarantees that the Gaussian prior is preserved. This strict geometric constraint inherently prevents norm inflation, thereby unlocking aggressive step sizes for drastically accelerated convergence and completely eliminating the conditions that trigger reward hacking. Ultimately, Oracle Noise unlocks the potential of native pre-trained diffusion models, achieving superior semantic alignment and aesthetic quality without the burden of auxiliary reward models or Euclidean-induced visual degradation.
Our main contributions are three-fold:
\begin{itemize}[noitemsep, topsep=0pt, parsep=0pt, partopsep=0pt]
    \item We identify the fundamental theoretical flaws of existing online optimization and search methods, specifically Euclidean-induced geometric degradation, semantic misallocation, and the vulnerability to \textit{reward hacking} caused by reliance on external proxy models.
    
    \item We formulate a rigorous mathematical proof that noise initialization must be treated as a hyperspherical optimization problem. Based on this, we propose \textbf{Oracle Noise}, featuring a novel token weighting mechanism based on \textit{representational collapse} capturing intrinsic generative value, coupled with a spherical geodesic update that strictly preserves the Gaussian prior while enabling rapid convergence.
    
    \item Extensive experiments demonstrate that Oracle Noise establishes new SOTA performance among online noise optimization and search methods. It entirely eliminates the need for external reward models and mitigates Euclidean-induced artifacts, achieving superior semantic alignment and aesthetic quality within an extremely short inference time (a strict 2-second budget).
\end{itemize}

\section{Related Works}

\paragraph{Inference-Time Noise Optimization and Scaling.} 
While standard diffusion models draw the initial latent $x_T$ from a Gaussian prior, recent studies reveal this initialization heavily dictates macroscopic layout and semantic alignment. Consequently, instance-level optimization strategies have emerged. Zero-shot methods like ``The Silent Prompt'' \cite{wang2025silentassistantnoisequeryimplicit} and InitNO \cite{guo2024initnoboostingtexttoimagediffusion} enhance text alignment directly, whereas scaling approaches like ReNO \cite{eyring2024renoenhancingonesteptexttoimage} and Stable Noise \cite{qi2024noisescreatedequallydiffusionnoise} utilize external reward models for gradient feedback. However, these paradigms face critical limitations. Reward-guided methods introduce computational bottlenecks and are prone to ``reward hacking.'' More fundamentally, performing gradient ascent in an unconstrained Euclidean space inflates the vector norm, pushing latents off their native hyperspherical manifold and inducing severe geometric degradation and artifacts.

\paragraph{Learned Noise Priors and Open Challenges.} 
Beyond instance-specific optimization, frameworks like ``Golden Noise'' \cite{zhou2025goldennoisediffusionmodels} aim to map standard Gaussian noise into a superior, learned global prior. While effective, learning a new prior requires intensive additional training. Achieving optimal noise alignment in a zero-shot manner, while strictly preserving the pre-trained latent space's geometric properties, remains an open challenge. Our proposed Oracle Noise framework directly addresses this gap, offering a principled optimization strategy that achieves zero-shot semantic alignment without succumbing to manifold degradation or proxy metric exploitation.
\section{Preliminaries}
\label{sec:preliminaries}

\subsection{Diffusion Models and Cross-Attention}
\label{subsec:ldm_and_attention}

LDMs perform the generative process within a compressed latent space. A pre-trained encoder \cite{rombach2022highresolutionimagesynthesislatent} projects an image $x$ into a latent representation $z_0$. The forward diffusion process gradually adds Gaussian noise over $T$ timesteps, producing a final state $z_T \sim \mathcal{N}(\mathbf{0}, \mathbf{I})$ \cite{ho2020denoisingdiffusionprobabilisticmodels, song2022denoisingdiffusionimplicitmodels} . During inference, the generative process starts from randomly sampled pure noise $z_T$. A conditional denoising network $\epsilon_\theta(z_t, t, c)$ (e.g., U-Net \cite{dosi2025harmonizinggeometryuncertaintydiffusion} or Transformer \cite{peebles2023scalablediffusionmodelstransformers} ) is trained to iteratively denoise the latent guided by a text prompt $c$:
\begin{equation}
    \mathcal{L}_{\text{LDM}} = \mathbb{E}_{z \sim \mathcal{E}(x), \epsilon \sim \mathcal{N}(\mathbf{0}, \mathbf{I}), t} \left[ \|\epsilon - \epsilon_\theta(z_t, t, c)\|^2 \right]
\end{equation}
The denoising network injects the textual conditions $c$ into the spatial latent features $\phi(z_t)$ primarily through cross-attention layers. Using text embeddings $\tau(c)$ extracted by the text encoder \cite{radford2021learningtransferablevisualmodels} , the attention mechanism \cite{vaswani2023attentionneed} projects these features into Queries ($Q = W_Q \phi(z_t)$), Keys ($K = W_K \tau(c)$), and Values ($V = W_V \tau(c)$). The cross-attention map $\mathbf{A}$ is computed as:
\begin{equation}
    \mathbf{A} = \text{Softmax}\left(\frac{QK^T}{\sqrt{d}}\right)
\end{equation}
where $d$ is the latent projection dimension. Each column in $\mathbf{A}$ represents the spatial layout distribution of a text token \cite{hertz2022prompttopromptimageeditingcross} . In our method, we focus on the test-time optimization of the initialization step. We leverage these native cross-attention maps $\mathbf{A}_l$ as an interpretable structural prior to refine $z_T$ before the reverse denoising process begins, thereby maximizing text-image semantic alignment.

\subsection{Classifier-Free Guidance \cite{ho2022classifierfreediffusionguidance} }
\label{subsec:cfg}

To enhance text conditioning and sample quality, LDMs typically employ CFG during the reverse sampling phase. Instead of relying solely on the conditional prediction $\epsilon_\theta(z_t, t, c)$, it linearly extrapolates between an unconditional prediction $\epsilon_\theta(z_t, t, \emptyset)$ and the conditional one. The modified noise prediction $\tilde{\epsilon}_\theta$ is formulated as:
\begin{equation}
    \tilde{\epsilon}_\theta(z_t, t, c) = \epsilon_\theta(z_t, t, \emptyset) + s \cdot \left( \epsilon_\theta(z_t, t, c) - \epsilon_\theta(z_t, t, \emptyset) \right)
\end{equation}
where $s \geq 1$ is the guidance scale that controls the trade-off between prompt adherence and image diversity.

\section{Methodology}
\label{sec:methodology}

To overcome semantic misallocation and geometric degradation inherent in Euclidean TTO approaches \cite{ma2025inferencetimescalingdiffusionmodels, 10657632, qi2024noisescreatedequallydiffusionnoise, eyring2024renoenhancingonesteptexttoimage} , we propose \textbf{Oracle Noise Optimization} (Algorithm \ref{alg:two_stage_optimization}). Our self-contained framework decomposes the initialization problem into two stages: isolating structural entities to efficiently route optimization energy (Section \ref{subsec:token_weighting}), and executing a strict, prior-preserving gradient ascent on the hyperspherical manifold to eliminate distribution shifts (Section \ref{subsec:spherical_optimization}).

\subsection{Multi-Encoder Token Weighting}
\label{subsec:token_weighting}

A natural language prompt $c$ contains a hierarchical semantic structure, where core entities dictate the visual layout while functional words (e.g., ``a'', ``the'') offer minimal generative value. Treating all tokens equally during noise optimization dilutes the structural conditioning. To mitigate this, we introduce a zero-shot token weighting mechanism that leverages the pre-trained text encoders without relying on external syntactic parsers.

Our core insight is that the intrinsic importance of a token can be quantified by measuring the \textit{representational collapse} of the global sentence embedding when that token is masked. Given a set of pre-trained encoders $\{\mathcal{E}_k\}_{k=1}^K$ (e.g., CLIP-L and CLIP-G in Stable Diffusion XL \cite{podell2023sdxlimprovinglatentdiffusion}), we first extract the base embeddings $E_k = \mathcal{E}_k(c)$. For each valid non-special token $j \in \mathcal{V}$, we construct a lesioned prompt $c^{\setminus j}$. To isolate semantic impact without disrupting the absolute positional encodings of subsequent tokens, this masking operation is implemented by replacing the $j$-th token with a neutral \texttt{[PAD]} token rather than physically deleting it from the sequence. The impact score of token $j$ is then defined as the averaged cosine distance between the base and lesioned embeddings across all encoders:
\begin{equation}
    \mathbf{I}[j] = \frac{1}{K} \sum_{k=1}^{K} \Big( 1 - \cos\big(E_k, E_k^{\setminus j}\big) \Big)
\end{equation}
A high impact score $\mathbf{I}[j]$ indicates that the removal of token $j$ significantly alters the semantic manifold of the prompt, thereby identifying it as a core structural entity. To stabilize the subsequent optimization, we apply an affine mapping function $\Phi$ to normalize these continuous scores into a bounded interval $[w_{\min}, w_{\max}]$. Masked by the valid token indicator $\mathbb{I}_{\mathcal{V}}$, we obtain the dense token weighting vector $\mathbf{M} \in \mathbb{R}^{|c|}$:
\begin{equation}
    \mathbf{M} = \Phi_{[w_{\min}, w_{\max}]}(\mathbf{I}) \odot \mathbb{I}_{\mathcal{V}}
\end{equation}

This weight vector $\mathbf{M}$ serves as a semantic routing map, ensuring that the optimization energy is concentrated exclusively on tokens that genuinely drive the generative layout.

\subsection{Prior-Preserving Spherical Optimization}
\label{subsec:spherical_optimization}

With the semantic routing map $\mathbf{M}$ established, the subsequent and most critical challenge is to optimize the initial latent $z_T$ without destroying the diffusion model's native standard Gaussian prior. Existing methods blindly apply Euclidean updates ($z_T \leftarrow z_T + \eta g$), which is theoretically disastrous. To rigorously justify the necessity of our spherical approach, we must first mathematically formalize the exact topological properties of the latent diffusion prior in the asymptotic limit of high dimensionality.

\subsubsection{The Hyperspherical Geometry of High-Dimensional Noise}

In modern LDMs, the initial noise vector $z$ is a continuous random variable sampled from a standard multivariate Gaussian distribution, i.e., $z \sim \mathcal{N}(\mathbf{0}, \mathbf{I}_D)$. The dimensionality $D$ is extraordinarily high (e.g., $D = 4 \times 64 \times 64 = 16,384$ for Stable Diffusion \cite{rombach2022highresolutionimagesynthesislatent}).

A fundamental property of high-dimensional probability is the concentration of measure. For a standard Gaussian random vector $z \sim \mathcal{N}(\mathbf{0}, \mathbf{I}_D)$, its squared Euclidean norm strictly follows a Chi-squared distribution with $D$ degrees of freedom ($\|z\|^2 \sim \chi^2(D)$), yielding an expected squared norm of $\mathbb{E}[\|z\|^2] = D$. As $D \to \infty$, this leads to the well-known \textit{Gaussian Annulus Theorem}:

\begin{theorem}[Gaussian Annulus Theorem \cite{Vershynin_2018}]
\label{thm:annulus}
As the dimensionality $D \to \infty$, the probability mass of the standard Gaussian distribution $\mathcal{N}(\mathbf{0}, \mathbf{I}_D)$ concentrates entirely within a razor-thin annulus around a Riemannian hypersphere of radius $\sqrt{D}$. Formally, for any arbitrarily small constant $\epsilon \in (0, 1)$:
\begin{equation}
    \lim_{D \to \infty} \mathbb{P}\left( \left| \frac{\|z\|}{\sqrt{D}} - 1 \right| \ge \epsilon \right) = 0
\end{equation}
\end{theorem}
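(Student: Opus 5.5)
The plan is to reduce the statement about $\|z\|/\sqrt{D}$ to a statement about the squared norm $\|z\|^2/D$, and then use Chebyshev's inequality on the $\chi^2(D)$ variable described just before Theorem~\ref{thm:annulus}. Write $\|z\|^2 = \sum_{i=1}^D z_i^2$. The $z_i^2$ are i.i.d. with mean $1$. Their variance is $\mathbb{E}[z_i^4]-1 = 3-1 = 2$, using the fourth Gaussian moment. Hence $\mathbb{E}\|z\|^2 = D$ and $\mathrm{Var}(\|z\|^2) = 2D$.

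First I will transfer the event from norms to squared norms. Put $r = \|z\|/\sqrt{D} \ge 0$. Then $|r^2 - 1| = |r-1|\,(r+1) \ge |r-1|$, because $r+1 \ge 1$. So the event $\{|r-1|\ge\epsilon\}$ is contained in $\{|r^2-1|\ge\epsilon\}$. This inclusion is the only place where the square root needs care, and the factorization handles it without splitting into the cases $r \ge 1$ and $r<1$. I expect it to be the main (mild) obstacle in the argument.

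Next I apply Chebyshev's inequality to $\|z\|^2/D$:
\begin{equation*}
\mathbb{P}\left(\left|\frac{\|z\|^2}{D}-1\right|\ge\epsilon\right) \le \frac{\mathrm{Var}(\|z\|^2)}{D^2\epsilon^2} = \frac{2}{D\epsilon^2}.
\end{equation*}
Combining this with the inclusion gives $\mathbb{P}(|r-1|\ge\epsilon)\le 2/(D\epsilon^2)$, which tends to $0$ as $D\to\infty$ for every fixed $\epsilon\in(0,1)$. That is exactly the claimed limit.

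To match the ``razor-thin'' quantitative phrasing, I would also note a sharper optional route. A Bernstein or Laurent--Massart bound for $\chi^2(D)$ gives $\mathbb{P}(|\|z\|^2/D - 1|\ge t)\le 2e^{-Dt^2/8}$ for $t\in(0,1)$. The same inclusion then yields exponential decay $2e^{-D\epsilon^2/8}$. Only the Chebyshev bound is needed for the limit as stated.
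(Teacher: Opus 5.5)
Your proof is correct. The variance $\mathrm{Var}(\|z\|^2)=2D$, the inclusion $\{|r-1|\ge\epsilon\}\subseteq\{|r^2-1|\ge\epsilon\}$ from $r+1\ge 1$, and Chebyshev together give $\mathbb{P}(|r-1|\ge\epsilon)\le 2/(D\epsilon^2)\to 0$.

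The paper gives no derivation of its own. It only records $\|z\|^2\sim\chi^2(D)$ with $\mathbb{E}\|z\|^2=D$ and defers to \cite{Vershynin_2018}. The mean alone does not yield concentration, and your variance computation supplies the step that the paper's ``as $D\to\infty$, this leads to'' skips.

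The cited source proves a stronger result with a different tool. It applies Bernstein's inequality to the sub-exponential variables $z_i^2-1$. It then uses the same reduction from $\|z\|$ to $\|z\|^2$, in the sharper form $|r-1|\ge\delta\Rightarrow|r^2-1|\ge\max(\delta,\delta^2)$, which also covers $\delta\ge 1$. This gives $\mathbb{P}\bigl(|\|z\|-\sqrt{D}|\ge t\bigr)\le 2e^{-ct^2}$ for all $t\ge 0$. That is the real ``razor-thin'' content: a shell of dimension-free width $O(1)$ around radius $\sqrt{D}$, with exponential tails. Chebyshev only controls deviations of size $\epsilon\sqrt{D}$, and only at a polynomial rate.

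So your route is more elementary, needing only the fourth Gaussian moment. The Bernstein route gives the exponential rate and the absolute width. For the limit as stated, either suffices. Your optional Laurent--Massart bound $2e^{-D\epsilon^2/8}$ checks out for $\epsilon\in(0,1)$ and recovers the exponential rate at relative scale.
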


Theorem~\ref{thm:annulus} establishes a profound geometric reality: in high-dimensional spaces ($D \sim 10^4$), the Typical Set (where almost all probability mass resides) is topologically isomorphic to a Riemannian hypersphere $\mathbb{S}^{D-1}(\sqrt{D})$. Any latent variable must strictly reside on this hyperspherical manifold to be considered an in-distribution sample of the diffusion prior.

This absolute geometric constraint immediately exposes the fatal theoretical flaw of traditional Euclidean test-time optimization.


\begin{theorem}
\label{thm:euclidean_degradation}
Let $z \in \mathbb{S}^{D-1}(\sqrt{D})$ be an initialized latent vector, and $\mathcal{L}$ be a non-trivial semantic objective with gradient $g = \nabla_z \mathcal{L} \neq \mathbf{0}$. For an unconstrained Euclidean step $z^{(new)} = z + \eta g$ ($\eta > 0$), the expected latent norm strictly inflates, i.e., $\mathbb{E}[\|z^{(new)}\|^2] > D$, guaranteeing divergence from the Gaussian Typical Set.
\end{theorem}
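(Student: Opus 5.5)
The plan is to expand the squared norm of the Euclidean update directly and to control the cross term by a symmetry argument over the random initialization. Since $z \in \mathbb{S}^{D-1}(\sqrt{D})$ we have $\|z\|^2 = D$, so
\begin{equation}
\|z^{(new)}\|^2 = \|z + \eta g\|^2 = D + 2\eta \langle z, g\rangle + \eta^2 \|g\|^2 .
\end{equation}
Taking expectations over the random initialization, the claim $\mathbb{E}[\|z^{(new)}\|^2] > D$ reduces to showing
\begin{equation}
2\eta\,\mathbb{E}[\langle z, g\rangle] + \eta^2\,\mathbb{E}[\|g\|^2] > 0 .
\end{equation}
The quadratic term is strictly positive, because $g \neq \mathbf{0}$ on a set of positive measure. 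The whole proof therefore rests on the sign of the cross term.

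\textbf{Step 1 (cross term).} I will decompose $g = g_{\parallel} + g_{\perp}$, where $g_{\parallel} = \frac{\langle z,g\rangle}{D} z$ is the radial component and $g_{\perp}$ lies in the tangent space at $z$. Only the radial part contributes to the cross term. I will then argue that a semantic objective built from cross-attention maps carries no systematic radial preference, so that $\mathbb{E}[\langle z, g\rangle] = 0$ (or at least $\ge 0$) when $z$ is drawn uniformly on $\mathbb{S}^{D-1}(\sqrt{D})$, equivalently from $z\sim\mathcal{N}(\mathbf{0},\mathbf{I}_D)$ projected to the sphere.

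For the Gaussian version, the natural tool is Stein's identity $\mathbb{E}[\langle z, \nabla \mathcal{L}(z)\rangle] = \mathbb{E}[\Delta \mathcal{L}(z)]$. For the spherical version, one can instead use the rotational invariance of the uniform measure on the sphere.

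\textbf{Step 2 (conclusion).} With $\mathbb{E}[\langle z,g\rangle] \ge 0$, we get $\mathbb{E}[\|z^{(new)}\|^2] \ge D + \eta^2\mathbb{E}[\|g\|^2] > D$. Then Theorem~\ref{thm:annulus} shows that the normalized radius $\|z^{(new)}\|/\sqrt{D}$ is pushed outside the annulus whenever $\eta^2\|g\|^2/D$ is not negligible. I will also record a deterministic version that needs no expectation: whenever
\begin{equation}
\eta > \frac{2\max(0,-\langle z,g\rangle)}{\|g\|^2},
\end{equation}
the norm strictly inflates for that particular $z$. This covers the ``aggressive step size'' regime the method targets.

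\textbf{Main obstacle.} The hard part is Step 1. For a general smooth $\mathcal{L}$ the radial component $\langle z, g\rangle$ can be negative. Stein's identity only converts the cross term into $\mathbb{E}[\Delta\mathcal{L}]$, which has no definite sign. So the statement as written cannot hold for arbitrary $\mathcal{L}$.

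I would first try to make explicit an isotropy assumption: the gradient's radial component is mean-zero, for instance because $g$ is (approximately) independent of the radial coordinate $\|z\|$ given the direction $z/\|z\|$. I would justify this by noting that attention-based objectives are insensitive to small global rescalings of the noise.

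If that assumption cannot be justified, I would fall back to the deterministic large-step statement above. Alternatively, I would prove inflation to second order for the tangential part alone, $\|z+\eta g_{\perp}\|^2 = D + \eta^2\|g_{\perp}\|^2 > D$, which holds exactly with no assumption whenever $g_\perp\neq\mathbf{0}$. This last fact is also precisely what motivates the subsequent geodesic retraction.
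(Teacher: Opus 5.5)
You expand the norm exactly as the paper does, and your diagnosis is correct: everything hinges on the cross term, and without a structural hypothesis on $\mathcal{L}$ the statement is false. For instance, $\mathcal{L}(z)=-\|z\|$ has $g=-z/\|z\|\neq\mathbf{0}$ and gives $z^{(new)}=(1-\eta/\sqrt{D})\,z$. This contracts for every $0<\eta<2\sqrt{D}$, whatever distribution you average over.

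The problem is that neither mechanism you propose for Step~1 can supply the missing hypothesis. Rotational invariance of the uniform measure on $\mathbb{S}^{D-1}(\sqrt{D})$ says nothing about $\langle z,g\rangle=\sqrt{D}\,\partial_r\mathcal{L}$, which depends on how $\mathcal{L}$ extends off the sphere. By the divergence theorem, its spherical average is again an integral of $\Delta\mathcal{L}$ over the ball, with no definite sign, exactly as with Stein.

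Your proposed sufficient condition for isotropy is also aimed at the wrong object. Requiring $g$ to be independent of $\|z\|$ given $z/\|z\|$ does not make the radial component mean-zero. The example above has $g=-z/\|z\|$, a function of the direction alone, yet $\langle z,g\rangle=-\|z\|<0$ everywhere.

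The missing idea, which the paper uses, is to put the radius-independence on $\mathcal{L}$ itself rather than on $g$. The paper argues that the normalization layers (GroupNorm/LayerNorm) preceding the attention projections make $\mathcal{L}$ (approximately) homogeneous of degree zero, $\mathcal{L}(cz)=\mathcal{L}(z)$ for $c>0$. Differentiating at $c=1$ (Euler's theorem, cf.\ Lemma~\ref{lemma:orthogonality}) gives $\langle z,g\rangle=0$ \emph{pointwise}. Hence $g=g_\perp$ and $\|z^{(new)}\|^2=D+\eta^2\|g\|^2>D$ for every $z$, so no expectation, Stein identity, or averaging over initializations is needed. Your remark that attention objectives are ``insensitive to global rescalings'' is precisely this hypothesis. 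Once it is stated that way, your tangential-only fallback is literally the paper's proof.

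What your treatment adds is honesty about the approximation. The paper only has $\langle z,g\rangle\approx 0$, and because the cross term is $O(\eta)$ while the gain is $O(\eta^2)$, the passage from ``$\approx$'' to ``$>$'' is not justified for small $\eta$. The paper's measure-zero remark does not help either, since it only excludes exact equality, not contraction. Your threshold $\eta>2\max(0,-\langle z,g\rangle)/\|g\|^2$ is the correct quantitative form of the claim. Your caveat that leaving the annulus requires $\eta^2\|g\|^2/D$ to be non-negligible correctly flags that ``guaranteeing divergence from the Typical Set'' overstates what the norm computation shows.
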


\begin{proof}
The $\ell_2$-norm of the updated latent vector expands as:
\begin{equation}
    \|z^{(new)}\|^2 = \|z\|^2 + 2\eta \langle z, g \rangle + \eta^2 \|g\|^2
\end{equation}
Preserving the prior norm ($\|z^{(new)}\|^2 = \|z\|^2$) imposes a strict geometric constraint on the gradient:
\begin{equation}
    \langle z, g \rangle = -\frac{\eta}{2} \|g\|^2 < 0
\end{equation}
For a highly non-linear neural objective $\mathcal{L}$, the probability of the gradient $g$ naturally satisfying this exact negative radial projection is strictly measure zero.

\begin{figure}[t]
\centering

\begin{minipage}[t]{0.48\textwidth}
    \vspace{0pt}
    \centering

    \includegraphics[width=\linewidth]{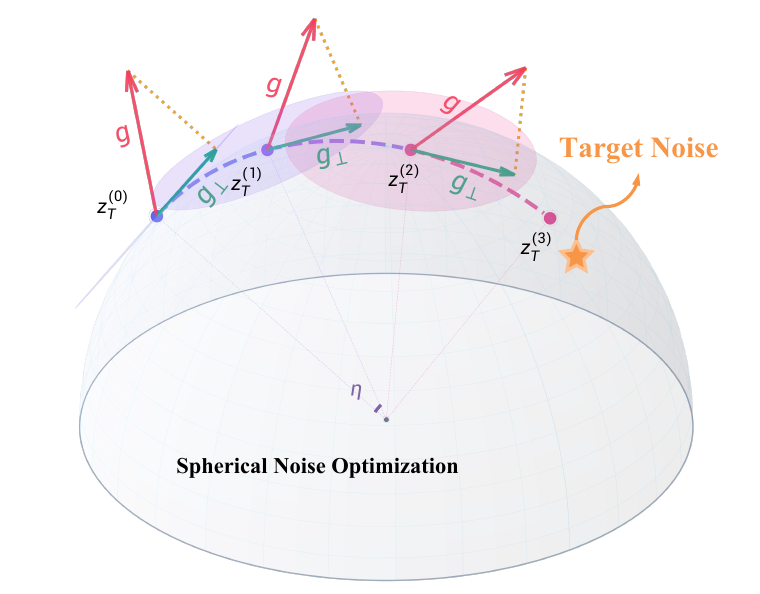}
    
    \vspace{0.3em}
    {\small \textbf{(a)} Prior-preserving spherical optimization.}

    \vspace{0.8em}

    \includegraphics[width=\linewidth]{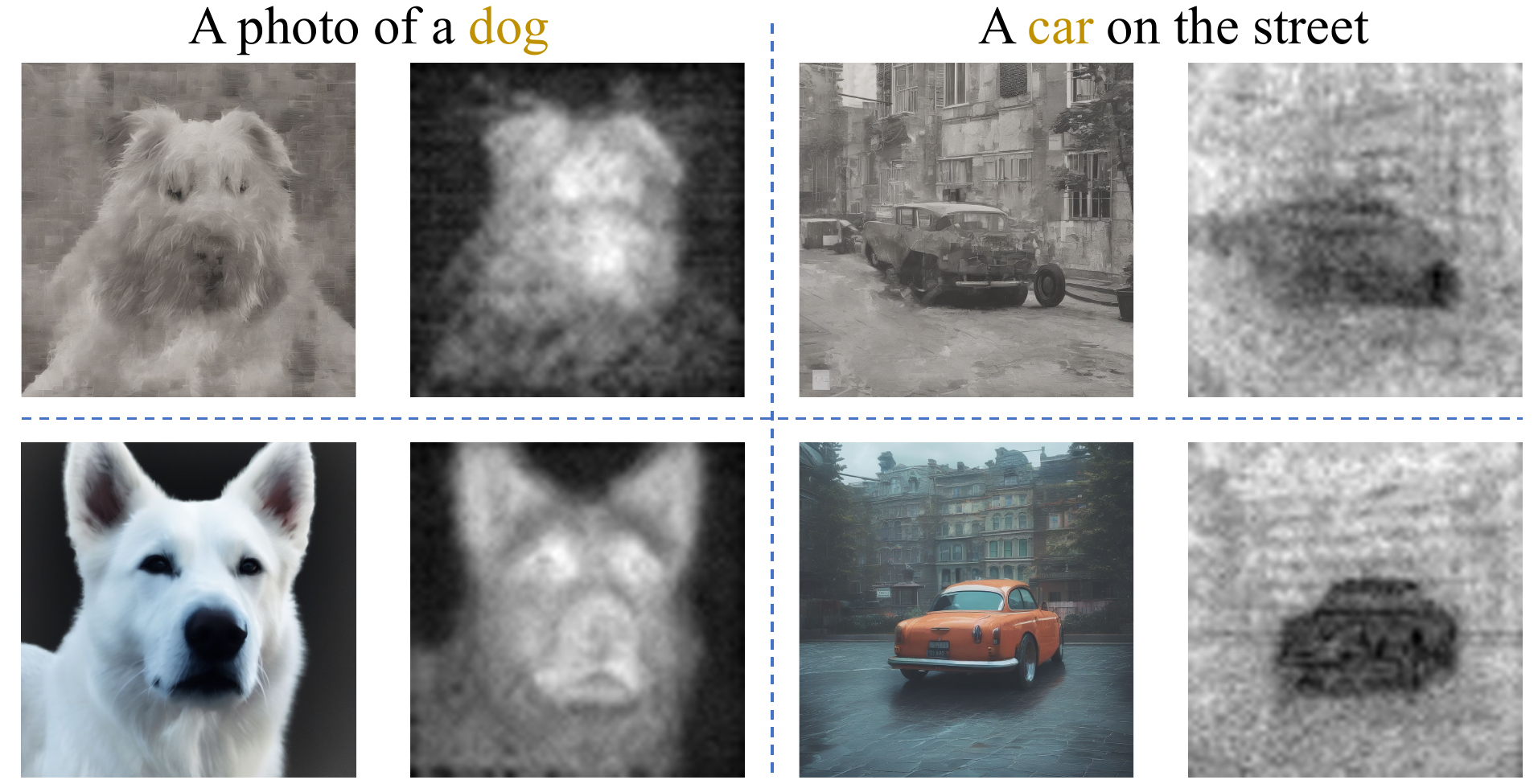}
    
    \vspace{0.3em}
    {\small \textbf{(b)} Cross-attention visualization.}
\end{minipage}
\hfill
\begin{minipage}[t]{0.48\textwidth}
    \vspace{0pt}
    \captionsetup{type=algorithm}
    \caption{Oracle Noise Optimization}
    \label{alg:two_stage_optimization}

    \footnotesize
    \begin{algorithmic}[1]
    \Require Diffusion Model $\mathcal{M}_\theta$, Encoders $\{\mathcal{E}_k\}_{k=1}^K$, prompt $c$, latent $z_T$, step size $\eta$, iterations $N$, bounds $[w_{\min}, w_{\max}]$, weights $\{\alpha_l\}$, guidance scale $s$, max timestep $T$
    \Ensure Aligned latent $z_T$

    \Statex \rule{\linewidth}{0.4pt}
    \Statex \textbf{Step 1: Multi-Encoder Token Weighting}

    \State $\mathcal{V} \leftarrow \{j \mid c_j \notin \mathcal{T}_{\text{special}} \}; \quad E_k \leftarrow \mathcal{E}_k(c), \forall k \in \{1, \dots, K\}$
    \For{$j \in \mathcal{V}$}
        \State $c^{\setminus j} \leftarrow \text{Replace}(c, c_j, \texttt{[PAD]}); \quad E_k^{\setminus j} \leftarrow \mathcal{E}_k(c^{\setminus j}), \forall k$
        \State $\mathbf{I}[j] \leftarrow \frac{1}{K} \sum_{k=1}^{K} \big( 1 - \cos(E_k, E_k^{\setminus j}) \big)$
    \EndFor
    \State $\mathbf{M} \leftarrow \Phi_{[w_{\min}, w_{\max}]}(\mathbf{I}) \odot \mathbb{I}_{\mathcal{V}}$

    \Statex \textbf{Step 2: Spherical Noise Optimization}

    \State $c_e \leftarrow \text{Embed}(c); \quad c_\emptyset \leftarrow \text{Embed}(\emptyset)$
    \For{$i = 1, \dots, N$}
        \State $\{Q_l^c, K_l^c\} \leftarrow \mathcal{M}_\theta(z_T, T, c_e); \quad \{Q_l^\emptyset, K_l^\emptyset\} \leftarrow \mathcal{M}_\theta(z_T, T, c_\emptyset)$
        \State $\mathcal{L} \leftarrow 0$
        \For{layer $l$}
            \State $L_l^c \leftarrow \frac{Q_l^c (K_l^c)^T}{\sqrt{d}}; \quad L_l^\emptyset \leftarrow \frac{Q_l^\emptyset (K_l^\emptyset)^T}{\sqrt{d}}$
            \State $\tilde{\mathbf{A}}_l \leftarrow \text{Softmax}\big(L_l^\emptyset + s \cdot (L_l^c - L_l^\emptyset)\big)$
            \State $\mathcal{L} \leftarrow \mathcal{L} + \alpha_l \sum_{p, j} \left( \tilde{\mathbf{A}}_{l,p,j} \cdot \mathbf{M}_j \right)$
        \EndFor
        \State $g \leftarrow \nabla_{z_T} \mathcal{L}; \quad g_{\perp} \leftarrow g - \frac{\langle z_T, g \rangle}{\|z_T\|^2} z_T$
        \State $z_T \leftarrow z_T \cos \eta + \|z_T\| \frac{g_{\perp}}{\|g_{\perp}\|} \sin \eta$
    \EndFor
    \State \Return $z_T$
    \end{algorithmic}
\end{minipage}

\caption{\textbf{Visualization of Oracle Noise.}
\textbf{Left-top:} Prior-preserving spherical optimization on the hypersphere.
\textbf{Left-bottom:} Cross-attention visualization showing the interaction between entity tokens and latent variables.
\textbf{Right:} Pseudocode of the Oracle Noise optimization pipeline.}
\label{fig:oracle_overview}
\end{figure}

    Crucially, modern diffusion backbones heavily rely on structural normalization
    layers (e.g., GroupNorm) prior to cross-attention projections. This renders the
    pre-softmax attention logits, and consequently our objective $\mathcal{L}(z)$,
    largely scale-invariant with respect to the norm of the latent input.
    Mathematically, $\mathcal{L}$ acts approximately as a degree-zero homogeneous
    function: $\mathcal{L}(c z) \approx \mathcal{L}(z)$ for any scalar $c > 0$.
    By Euler's Homogeneous Function Theorem, the gradient of a strictly
    scale-invariant function is identically orthogonal to its input vector,
    establishing $\langle z, g \rangle \approx 0$.

    Taking the expectation of the expanded norm gives:
    \begin{equation}
    \begin{aligned}
        \mathbb{E}\!\left[\|z^{(new)}\|^2\right]
        &\approx D + \eta^2 \mathbb{E}\!\left[\|g\|^2\right].
    \end{aligned}
    \end{equation}

    Since $\eta > 0$ and $g \neq \mathbf{0}$, the strictly positive quadratic term
    ensures $\eta^2 \mathbb{E}[\|g\|^2] > 0$, consequently yielding:
    \begin{equation}
        \mathbb{E}\!\left[\|z^{(new)}\|^2\right] > D .
    \end{equation}

Therefore, the updated latent $z^{(new)}$ monotonically escapes the native
$\sqrt{D}$-hypersphere defined in Theorem~\ref{thm:annulus}.
\end{proof}

Theorem~\ref{thm:euclidean_degradation} proves that an unconstrained Euclidean step systematically forces the latent out of the prior's Typical Set, inevitably triggering the severe visual degradation and color over-saturation observed in baseline methods. While fixing the norm technically yields a uniform spherical distribution, it remains asymptotically equivalent to the high-dimensional standard Gaussian \cite{Vershynin_2018}. This mathematical reality necessitates a fundamental paradigm shift to \textbf{Spherical CFG-Aware Optimization}.

\subsubsection{Spherical CFG-Aware Optimization}
To optimize $z_T$ while strictly remaining on this $\sqrt{D}$ hypersphere, we formulate a \textit{CFG-aware} objective. Standard approaches naively optimize the conditional attention maps, ignoring the Classifier-Free Guidance applied during reverse sampling, causing the optimization trajectory to diverge from the final generation path. Instead, we utilize the frozen denoising network $\mathcal{M}_\theta$ at the maximum noise timestep $T$ to extract pre-softmax attention logits for both conditional ($L^c$) and unconditional

($L^\emptyset$) forward passes. We perform CFG extrapolation strictly within the logit space:
\begin{equation}
    \tilde{\mathbf{A}}_l = \text{Softmax}\left( L^\emptyset_l + s \cdot (L^c_l - L^\emptyset_l) \right)
\end{equation}

where $s$ is the guidance scale. We compute our objective function $\mathcal{L}(z_T)$ as the weighted sum of these CFG-aware maps across all spatial positions $p$ and token indices $j$, modulated by our token weighting vector $\mathbf{M}$ and layer-wise scalars $\alpha_l$:
\begin{equation}
    \mathcal{L}(z_T) = \sum_{l} \alpha_l \sum_{p} \sum_{j} \left( \tilde{\mathbf{A}}_{l, p, j} \cdot \mathbf{M}_j \right)
\end{equation}

Instead of treating all attention layers equally, we assign increasing weights (e.g., $\alpha_l \in \{1.0, 1.5, 2.0\}$) to the shallow, middle, and deep layers, respectively. This design is empirically motivated by interpretability studies on diffusion models \cite{hertz2022prompttopromptimageeditingcross, tang2022daaminterpretingstablediffusion, tumanyan2022plugandplaydiffusionfeaturestextdriven}. Prior arts demonstrate that while early encoding layers capture low-level global context, the later decoding layers (or deeper transformer blocks) dominate fine-grained semantic alignment and spatial layout generation. Therefore, applying stronger constraints on these deep layers yields more accurate text-to-image semantic matching.

To execute the update without escaping the hypersphere, we first compute the raw Euclidean gradient $g = \nabla_{z_T} \mathcal{L}$. We then orthogonally project this gradient onto the tangent plane of the hypersphere at the current latent point $z_T$:
\begin{equation}
    g_{\perp} = g - \left( \frac{\langle z_T, g \rangle}{\|z_T\|^2} \right) z_T
\end{equation}

Finally, we perform a \textit{spherical geodesic step} strictly along the great circle defined by $z_T$ and the normalized tangent direction $g_{\perp}/\|g_{\perp}\|$. Parameterized by the angular step size $\eta$, the precise update rule is formulated as:
\begin{equation}
    z_T \leftarrow z_T \cos \eta + \|z_T\| \frac{g_{\perp}}{\|g_{\perp}\|} \sin \eta
    \label{eq:geodesic_step}
\end{equation}

Equation (\ref{eq:geodesic_step}) provides a strict mathematical guarantee: the $\ell_2$-norm of the latent remains exactly invariant ($\|z_T^*\| \equiv \|z_T\|$) throughout the $N$ optimization iterations. The final optimized latent $z_T^*$ safely navigates the highly non-linear attention landscape to embed critical spatial layouts, whilst flawlessly preserving its identity as a valid sample drawn from $\mathcal{N}(\mathbf{0}, \mathbf{I})$.

Because our geodesic update rigorously prevents norm inflation, we completely bypass the geometric degradation that plagues previous works. More importantly, this absolute structural stability permits the use of significantly larger, more aggressive optimization step sizes ($\eta$), drastically accelerating convergence without risking divergence or triggering reward hacking vulnerabilities.

\section{Empirical Analysis}

\subsection{Experimental Settings}

\paragraph{Datasets \& Metrics.}
We evaluate our framework on four diverse benchmarks to capture different generative capabilities. We use \textbf{MS-COCO 2017 \cite{lin2015microsoftcococommonobjects}} (5k val) to test zero-shot fidelity, \textbf{DrawBench \cite{saharia2022photorealistic}} for complex spatial relations, and \textbf{GenEval \cite{ghosh2023genevalobjectfocusedframeworkevaluating}} for fine-grained compositional reasoning and object counting. Additionally, we utilize \textbf{Pick-a-Pic \cite{kirstain2023pickapicopendatasetuser}} to assess alignment with complex human preferences. To comprehensively quantify performance, generative quality and diversity are measured using \textbf{FID \cite{Seitzer2020FID}} and \textbf{Vendi Score \cite{friedman2023vendiscorediversityevaluation}}, while \textbf{CLIP Score \cite{radford2021learningtransferablevisualmodels}} evaluates overall text-image semantic matching. Finally, detailed intent alignment and visual appeal are rigorously assessed via \textbf{HPSv2 \cite{wu2023human}}, \textbf{ImageReward \cite{xu2023imagerewardlearningevaluatinghuman}}, \textbf{PickScore}, and \textbf{Aesthetics \cite{schuhmann2022laion5bopenlargescaledataset}}.

\begin{table}[H]
  \centering
  \caption{Quantitative comparison of optimization methods on SDXL and SD3.5 Medium under conditional generation}
  \label{tab:main_results}
  \resizebox{\textwidth}{!}{ 
  \begin{tabular}{lllccccccc}
    \toprule
    \textbf{Model} & \textbf{Dataset} & \textbf{Method} & \textbf{HPSv2 ($\uparrow$)} & \textbf{ImageReward ($\uparrow$)} & \textbf{PickScore ($\uparrow$)} & \textbf{Aesthetics ($\uparrow$)} & \textbf{Vendi Score ($\uparrow$)} & \textbf{CLIP Score ($\uparrow$)} & \textbf{Time (s) $\downarrow$} \\
    \midrule
    
    \multirow{8}{*}{SDXL} & \multirow{4}{*}{Pick-a-Pic} 
    & Gaussian & 24.53 & -1.01 & 17.55 & 5.97 & 8.79 & 57.05 & -- \\
    & & InitNO & 25.20 & -0.60 & 17.60 & 6.05 & 9.50 & 58.80 & 35.0 \\
    & & Stable Noise & 25.40 & -0.45 & 17.62 & \textbf{6.14} & 10.20 & 58.20 & $>600$ \\
    & & \textbf{Oracle Noise (Ours)} & \textbf{25.85} & \textbf{-0.21} & \textbf{17.72} & 6.13 & \textbf{11.33} & \textbf{59.38} & \textbf{2.0} \\
    \cmidrule(lr){2-10}
    & \multirow{4}{*}{DrawBench} 
    & Gaussian & 24.67 & -1.16 & 19.26 & 5.56 & 9.76 & 49.68 & -- \\
    & & InitNO & 25.10 & -0.90 & 19.30 & 5.55 & 10.50 & 51.80 & 35.0 \\
    & & Stable Noise & 25.35 & -0.85 & 19.32 & \textbf{5.59} & 11.20 & 52.00 & $>600$ \\
    & & \textbf{Oracle Noise (Ours)} & \textbf{25.69} & \textbf{-0.68} & \textbf{19.44} & 5.48 & \textbf{13.02} & \textbf{54.23} & \textbf{2.0} \\
    
    \midrule
    
    \multirow{8}{*}{SD3.5-M} & \multirow{4}{*}{Pick-a-Pic} 
    & Gaussian & 25.21 & -0.61 & 17.67 & \textbf{5.67} & 11.99 & 57.97 & -- \\
    & & InitNO & 25.35 & -0.45 & 17.68 & 5.65 & 12.30 & 59.50 & 35.0 \\
    & & Stable Noise & 25.45 & -0.40 & 17.68 & \textbf{5.67} & 12.50 & 59.80 & $>600$ \\
    & & \textbf{Oracle Noise (Ours)} & \textbf{25.76} & \textbf{-0.22} & \textbf{17.72} & 5.63 & \textbf{13.49} & \textbf{61.49} & \textbf{2.0} \\
    \cmidrule(lr){2-10}
    & \multirow{4}{*}{DrawBench} 
    & Gaussian & 25.76 & -0.55 & 19.39 & 5.19 & 12.50 & 58.76 & -- \\
    & & InitNO & 25.80 & -0.50 & 19.39 & 5.20 & 13.00 & 58.80 & 35.0 \\
    & & Stable Noise & 25.82 & -0.48 & 19.39 & \textbf{5.23} & 13.20 & 58.85 & $>600$ \\
    & & \textbf{Oracle Noise (Ours)} & \textbf{25.97} & \textbf{-0.40} & \textbf{19.42} & 5.21 & \textbf{14.60} & \textbf{59.18} & \textbf{2.0} \\
    \bottomrule
  \end{tabular}
  }
\end{table}

\begin{table}[H]
  \centering
  \caption{Quantitative comparison of optimization methods on SDXL and SD3.5 Medium under CFG generation}
  \label{tab:main_results_cfg}
  \resizebox{\textwidth}{!}{ 
  \begin{tabular}{lllccccccc}
    \toprule
    \textbf{Model} & \textbf{Dataset} & \textbf{Method} & \textbf{HPSv2 ($\uparrow$)} & \textbf{ImageReward ($\uparrow$)} & \textbf{PickScore ($\uparrow$)} & \textbf{Aesthetics ($\uparrow$)} & \textbf{Vendi Score ($\uparrow$)} & \textbf{CLIP Score ($\uparrow$)} & \textbf{Time (s) $\downarrow$} \\
    \midrule
    
    \multirow{8}{*}{SDXL} & \multirow{4}{*}{Pick-a-Pic} 
    & Gaussian & 27.88 & 0.58 & 17.73 & 6.03 & 14.86 & 71.83 & -- \\
    & & InitNO & 27.91 & 0.60 & 17.76 & 6.05 & \textbf{15.50} & 72.10 & 18.9 \\
    & & Stable Noise & 27.92 & 0.61 & 17.78 & 6.10 & 15.00 & 72.20 & 35.0 \\
    & & \textbf{Oracle Noise (Ours)} & \textbf{27.97} & \textbf{0.67} & \textbf{17.85} & \textbf{6.13} & 15.43 & \textbf{73.00} & \textbf{2.0} \\
    \cmidrule(lr){2-10}
    & \multirow{4}{*}{DrawBench} 
    & Gaussian & 28.07 & 0.51 & 19.91 & 5.43 & 16.97 & 69.79 & -- \\
    & & InitNO & 28.12 & 0.53 & 19.94 & 5.45 & 17.50 & 69.90 & 18.9 \\
    & & Stable Noise & 28.15 & 0.53 & \textbf{20.05} & 5.46 & 17.80 & 69.95 & 35.0 \\
    & & \textbf{Oracle Noise (Ours)} & \textbf{28.25} & \textbf{0.57} & 20.02 & \textbf{5.48} & \textbf{18.89} & \textbf{70.21} & \textbf{2.0} \\
    
    \midrule
    
    \multirow{8}{*}{SD3.5-M} & \multirow{4}{*}{Pick-a-Pic} 
    & Gaussian & 28.16 & 0.81 & 17.79 & 5.84 & 15.10 & 69.12 & -- \\
    & & InitNO & 28.35 & 0.84 & 17.81 & 5.87 & 15.50 & 69.45 & 18.0 \\
    & & Stable Noise & 28.45 & 0.85 & 17.82 & \textbf{5.95} & 15.60 & 69.60 & 32.0 \\
    & & \textbf{Oracle Noise (Ours)} & \textbf{28.76} & \textbf{0.93} & \textbf{17.85} & 5.93 & \textbf{15.88} & \textbf{70.14} & \textbf{2.0} \\
    \cmidrule(lr){2-10}
    & \multirow{4}{*}{DrawBench} 
    & Gaussian & 28.98 & 0.79 & 18.44 & 5.35 & 18.37 & 71.37 & -- \\
    & & InitNO & 29.02 & 0.82 & \textbf{18.50} & 5.36 & 18.45 & 71.50 & 18.0 \\
    & & Stable Noise & 29.05 & 0.84 & 18.44 & 5.36 & 18.50 & 71.60 & 32.0 \\
    & & \textbf{Oracle Noise (Ours)} & \textbf{29.13} & \textbf{0.89} & 18.45 & \textbf{5.40} & \textbf{18.72} & \textbf{72.10} & \textbf{2.0} \\
    \bottomrule
  \end{tabular}
  }
\end{table}

\paragraph{Models \& Parameters.}
To demonstrate architectural generalization, we implement our method across multiple diffusion backbones. All inference timings and experiments were conducted on a single NVIDIA RTX 4090 GPU. For \textbf{SDXL \cite{podell2023sdxlimprovinglatentdiffusion}}, we use the standard 50-step setup (CFG 7.5), while \textbf{SDXL-Turbo \cite{sauer2023adversarialdiffusiondistillation}} is tested in an extreme 1-step regime. Both utilize either single-step ($\eta = 0.05$) or multi-step ($\eta = 0.005, N = 10$) optimization. We also evaluate the newer \textbf{SD3.5-Medium \cite{esser2024scalingrectifiedflowtransformers}} (28 steps, CFG 4.5) with single-step ($\eta = 0.01$) or multi-step ($\eta = 0.005, N = 2$) configurations. To maximize semantic routing efficiency during attention optimization, we apply hierarchical layer weights $\alpha_l = \{1.0, 1.5, 2.0\}$ to progressively constrain semantically dense layers. Specifically, these correspond to the first down-block, mid-block, and first up-block in SDXL models, and transformer blocks 0, 11, and 23 in SD3.5-Medium. For the token weighting mechanism, the affine mapping interval bounds $[w_{\min}, w_{\max}]$ are empirically set to $[0.5, 3.0]$.

\subsection{Main Experiments}


\textbf{State-of-the-Art Alignment within a 2-Second Budget.} We first evaluate the core text-image alignment and aesthetic quality under standard conditional generation (Table~\ref{tab:main_results}). Across both SDXL and SD3.5-Medium architectures, Oracle Noise consistently establishes new state-of-the-art performance, achieving the highest HPSv2, ImageReward, and CLIP scores on the complex Pick-a-Pic and DrawBench datasets. Crucially, existing test-time optimization baselines (e.g., InitNO, Stable Noise) incur prohibitive inference latency, often ranging from 35 seconds to over 10 minutes per image due to their inefficient Euclidean search. In stark contrast, by mathematically eliminating norm inflation and safely utilizing aggressive step sizes, Oracle Noise achieves superior alignment and sample diversity (Vendi Score) within a strict 2-second budget. This extraordinary efficiency robustly generalizes even to extreme few-step distilled models like SDXL-Turbo (Table~\ref{tab:distilled_model_results}), where our method significantly outperforms ReNO across all human preference metrics while accelerating the optimization overhead by $15\times$.
\begin{table}[t]
  \centering
  \caption{Comparison of optimization methods on SDXL-Turbo. While ReNO requires dozens of steps, Oracle Noise achieves superior quality within a strict 2-second budget.}
  \label{tab:distilled_model_results}
  \setlength{\tabcolsep}{2.5pt}
  \footnotesize
  \begin{tabular}{llccccccc}
    \toprule
    \textbf{Dataset} & \textbf{Method} & \textbf{HPSv2} & \textbf{ImageReward} & \textbf{PickScore} & \textbf{Aesthetics} & \textbf{Vendi Score} & \textbf{CLIP Score} & \textbf{Time} \\
    \midrule
    \multirow{3}{*}{Pick-a-Pic} 
    & Gaussian & 27.93 & 0.72 & 17.69 & 5.98 & 14.41 & 70.26 & -- \\
    & ReNO     & 28.01 & 0.75 & 17.75 & 6.01 & 14.80 & 70.50 & 30s \\
    & \textbf{Oracle} & \textbf{28.10} & \textbf{0.78} & \textbf{17.83} & \textbf{6.05} & \textbf{15.34} & \textbf{70.93} & \textbf{2s} \\
    \midrule
    \multirow{3}{*}{DrawBench} 
    & Gaussian & 28.41 & 0.67 & \textbf{19.99} & 5.49 & 17.11 & 69.25 & -- \\
    & ReNO     & 28.55 & 0.71 & 19.95 & 5.55 & 17.40 & 69.40 & 30s \\
    & \textbf{Oracle} & \textbf{28.71} & \textbf{0.76} & 19.98 & \textbf{5.64} & \textbf{17.72} & \textbf{69.57} & \textbf{2s} \\
    \bottomrule
  \end{tabular}
\end{table}

\begin{table}[t]
\centering
\caption{\textbf{GenEval results on SDXL.} Oracle Noise improves compositional reasoning across categories.}
\label{tab:geneval_single}
\footnotesize
\setlength{\tabcolsep}{2.5pt}
\renewcommand{\arraystretch}{1.08}
\begin{tabular}{lccccccc}
\toprule
\textbf{Method} & \textbf{Single Object} $\uparrow$ & \textbf{Two Objects} $\uparrow$ & \textbf{Counting} $\uparrow$ & \textbf{Color} $\uparrow$ & \textbf{Position} $\uparrow$ & \textbf{Color Attribution} $\uparrow$ & \textbf{Overall} $\uparrow$ \\
\midrule
Gaussian & 97.60 & 70.50 & 34.50 & 86.80 & 10.20 & 20.10 & 53.18 \\
\textbf{Oracle} & \textbf{99.75} & \textbf{74.25} & \textbf{45.00} & \textbf{87.18} & \textbf{10.50} & \textbf{23.40} & \textbf{56.59} \\
\bottomrule
\end{tabular}
\end{table}

\begin{figure*}[t]
\centering

\begin{minipage}[t]{0.58\textwidth}
\vspace{0pt}

\captionof{table}{Conditional generation on COCO val2017 (5k). We compare standard Gaussian noise with the proposed Oracle Noise initialization on SDXL and SD 3.5-M, reporting both FID and CLIP scores to evaluate image fidelity and text-image alignment.}
\label{tab:coco_results}
\centering
\small
\resizebox{\linewidth}{!}{
\begin{tabular}{lcccc}
    \toprule
    \multirow{2}{*}{\textbf{Initialization}} & \multicolumn{2}{c}{\textbf{SDXL}} & \multicolumn{2}{c}{\textbf{SD 3.5-M}} \\
    \cmidrule(lr){2-3} \cmidrule(lr){4-5}
    & \textbf{FID} $\downarrow$ & \textbf{CLIP} $\uparrow$ & \textbf{FID} $\downarrow$ & \textbf{CLIP} $\uparrow$ \\
    \midrule
    Gaussian Noise & 52.80 & 0.2157 & 28.94 & 0.2379 \\
    \textbf{Oracle Noise (Ours)} & \textbf{38.96} & \textbf{0.2717} & \textbf{25.96} & \textbf{0.2523} \\
    \bottomrule
\end{tabular}
}

\vspace{1.5em}

\captionof{table}{Ablation of Oracle Noise on SDXL. \textbf{IR}: ImageReward, \textbf{Pick}: PickScore, \textbf{Aes.}: Aesthetics. \textbf{w/o}: without. Progressive ablation from the full method confirms the necessity of each component.}
\label{tab:ablation_study}
\centering
\small
\setlength{\tabcolsep}{3.5pt}

\begingroup
\renewcommand{\arraystretch}{1.32}
\resizebox{\linewidth}{!}{
\begin{tabular}{lcccccc}
    \toprule
    \textbf{Method} & \textbf{HPSv2} & \textbf{IR} & \textbf{Pick} & \textbf{Aes} & \textbf{Vendi} & \textbf{CLIP} \\
    \midrule
    \textbf{Oracle (Full Method)} & \textbf{25.85} & \textbf{-0.21} & \textbf{17.72} & \textbf{6.13} & \textbf{11.33} & \textbf{59.38} \\
    \addlinespace[2pt]
    \midrule
    \quad w/o Multi-step Integration & 25.10 & -0.70 & 17.60 & 5.92 & 10.20 & 56.40 \\
    \addlinespace[1.5pt]
    \quad w/o CLIP Token Weighting & 24.85 & -0.82 & 17.56 & 5.80 & 9.85 & 55.20 \\
    \addlinespace[1.5pt]
    \quad w/o Spherical Const. (Eucl.) & 24.61 & -0.90 & 17.53 & 5.72 & 9.49 & 54.11 \\
    \bottomrule
\end{tabular}
}
\endgroup

\end{minipage}
\hfill
\begin{minipage}[t]{0.4\textwidth}
\vspace{0pt}
\centering
\includegraphics[width=\linewidth]{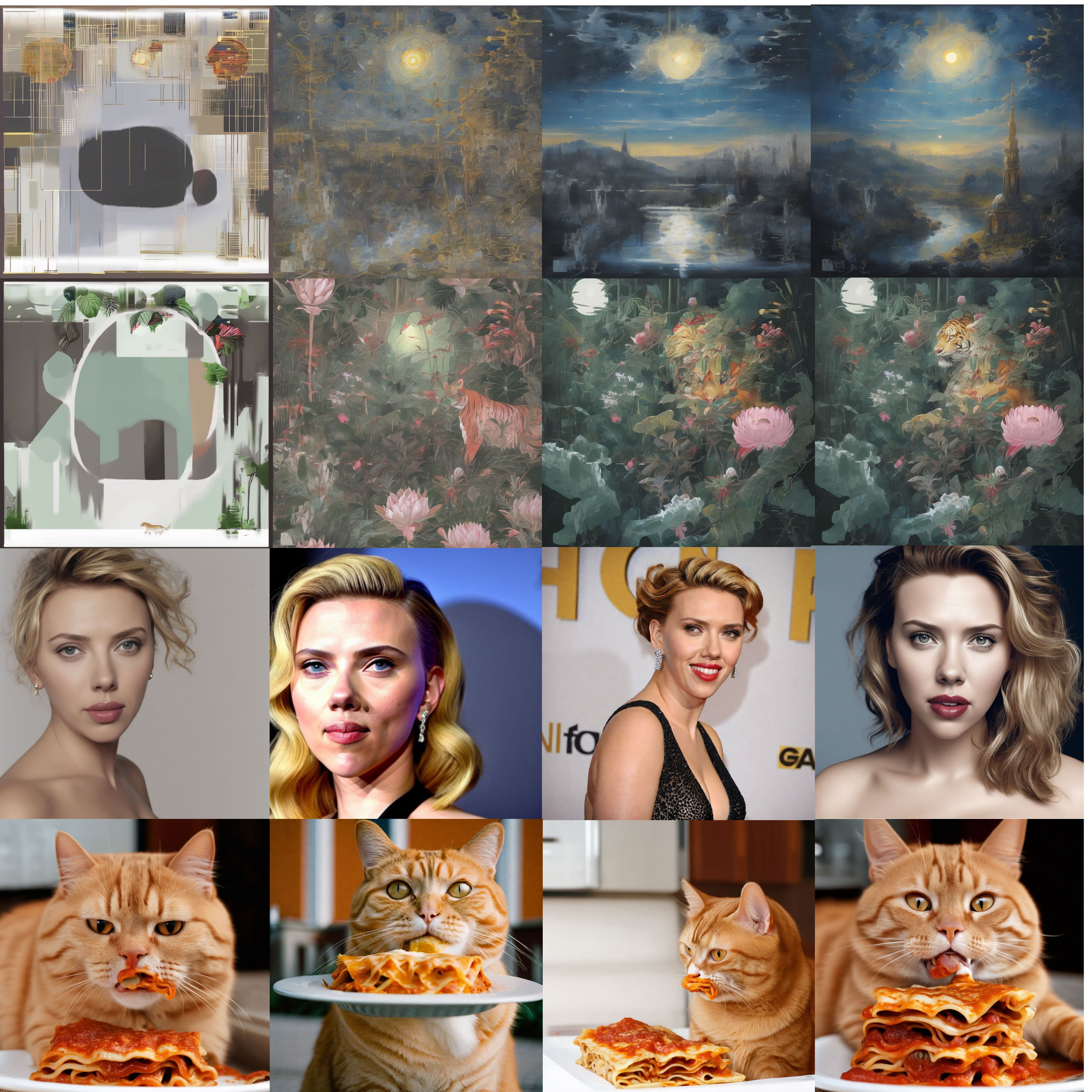}
\vspace{-1em}
\captionof{figure}{Qualitative comparison and ablation of Oracle Noise. Top: ablations from Euclidean single-step to full method. Bottom: CFG comparison with Gaussian noise, Initio, Stable Noise, and Oracle Noise. Oracle Noise improves visual quality and semantic alignment while preserving manifold.}
\label{fig:noise_visualization}
\end{minipage}

\end{figure*}

\begin{figure}[t]
  \centering
  \includegraphics[width=\textwidth]{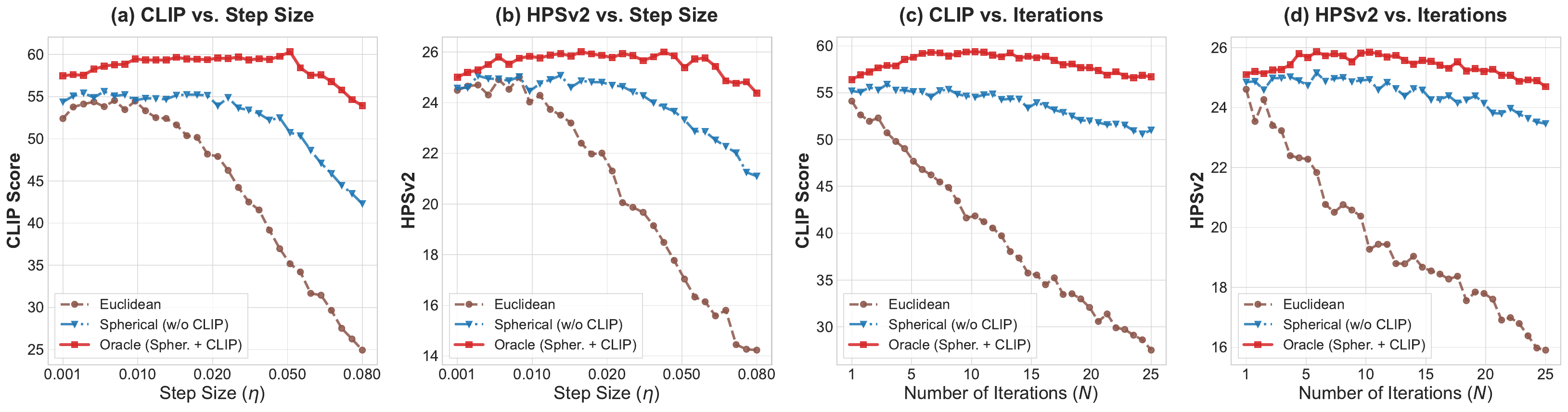}
  \caption{\textbf{Comprehensive ablation across hyperparameter dynamics ($\eta$ and $N$).} We trace the performance of fundamental optimization variants across varying step sizes and iteration counts. \textbf{(a, b):} Unconstrained Euclidean updates inevitably crash under aggressive step sizes due to latent norm inflation. While the pure Spherical formulation improves geometric stability, its optimization ceiling is severely limited by attention over-saturation on functional words. \textbf{(c, d):} By integrating CLIP-based structural routing, our full \textbf{Oracle} framework decisively breaches this ceiling, rapidly converging to peak generative fidelity at $\eta=0.05$ and $N=10$. Notably, the trajectories faithfully capture the ``geodesic overshoot'' phenomenon at extreme bounds ($\eta \ge 0.08$, $N \ge 20$), empirically validating our optimal hyperparameter selection while demonstrating Oracle's superior robustness within the practical inference budget.}
  \label{fig:hyperparams_comprehensive}
\end{figure}

\textbf{Prior Preservation under Practical CFG Generation.} A fundamental flaw of existing Euclidean-based methods is their tendency to destroy the native Gaussian prior, leading to severe visual degradation (e.g., color over-saturation) when deployed with Classifier-Free Guidance (CFG). To rigorously evaluate this, we assess the methods under practical CFG generation (Table~\ref{tab:main_results_cfg}). While baseline methods occasionally manage marginal gains in isolated metrics, their out-of-distribution latents struggle to stably interact with the CFG mechanism, leading to sub-optimal aesthetics. Conversely, because our spherical geodesic update mathematically guarantees the preservation of the $\mathcal{N}(\mathbf{0}, \mathbf{I})$ distribution, Oracle Noise completely mitigates CFG-induced structural collapse. It maintains absolute dominance across ImageReward, PickScore, and CLIP Score, demonstrating that our optimization path remains perfectly aligned with the pre-trained generative manifold.

\textbf{Superior Fine-Grained Compositional Reasoning.} Beyond aesthetic alignment, we challenge our framework with the rigorous GenEval benchmark (Table~\ref{tab:geneval_single}) to assess its capability in fine-grained semantic control. Traditional equal-weighting approaches fail to parse complex multi-object prompts, resulting in semantic bleed and layout confusion. Empowered by our multi-encoder representational collapse mechanism, Oracle Noise dynamically isolates and prioritizes core structural entities. Consequently, it consistently improves compositional reasoning across all categories. Most notably, our method achieves substantial gains in accurate counting (\textbf{Cnt.}, +10.50) and multiple attribute binding (\textbf{Attr.}, +3.30). This proves that directing optimization energy strictly toward high-value structural tokens is the key to unlocking flawless detailed control without relying on external proxy models.

\subsection{Ablation Study}
\label{subsec:ablation_study}

We ablate the core components of our framework in Table~\ref{tab:ablation_study} and analyze its hyperparameter dynamics ($\eta$ and $N$) in Figure~\ref{fig:hyperparams_comprehensive}.

\textbf{Component Ablation.} We validate the contribution of each core module by progressively ablating them from the full pipeline. As shown in Table~\ref{tab:ablation_study}, the complete \textbf{Oracle} method---which integrates multi-step geodesic updates with multi-encoder token weighting---achieves peak generative fidelity and sample diversity across all metrics. Removing the multi-step mechanism and reverting to a single-step spherical update (\textit{w/o Multi-step}) causes a noticeable drop in aesthetic quality (HPSv2: $25.85 \to 25.10$) and text-image alignment (CLIP: $59.38 \to 56.40$), confirming that iterative geodesic navigation is crucial for deeply embedding complex spatial layouts. Further removing the CLIP-based token weighting (\textit{w/o Token Weighting}) dilutes the optimization energy across non-informative functional words. This lack of targeted routing leads to an attention over-saturation bottleneck, visibly degrading intent alignment (CLIP Score: $56.40 \to 55.20$). Finally, replacing our strict spherical constraint with a baseline unconstrained Euclidean update (\textit{w/o Spherical Const.}) exposes the fundamental geometric flaw discussed in Section~\ref{subsec:spherical_optimization}. Driven by latent norm inflation under aggressive step sizes, the optimization trajectory diverges, resulting in the lowest aesthetic alignment (Aes: $5.72$) and geometric stability.

\textbf{Hyperparameter Dynamics.} Our comprehensive analysis (Figure~\ref{fig:hyperparams_comprehensive} c, d) demonstrates how these variants behave over varying iterations. Guided by CLIP-based structural routing, the full Oracle framework decisively breaches the previous optimization ceiling, rapidly converging to peak generative fidelity at $\eta=0.05$ and $N=10$. Notably, the trajectories faithfully capture the ``geodesic overshoot'' phenomenon at extreme bounds ($\eta \ge 0.08, N \ge 20$). This empirically validates our optimal hyperparameter selection while demonstrating Oracle's superior robustness within the practical 2-second inference budget.

\section{Conclusion}
\label{sec:conclusion}

In this paper, we introduced \textbf{Oracle Noise}, a zero-shot, prior-preserving framework that fundamentally resolves the theoretical and empirical bottlenecks of test-time noise optimization in diffusion models. We mathematically demonstrated that traditional Euclidean updates inevitably destroy the high-dimensional Gaussian prior. To circumvent this, we reframed noise initialization as a rigorous Riemannian hyperspherical optimization problem. By synergizing a novel, parser-free token weighting mechanism with a strict spherical geodesic update, Oracle Noise perfectly preserves the native noise distribution while efficiently routing optimization energy to core structural entities. Extensive evaluations confirm that our approach completely mitigates Euclidean-induced visual degradation and reward hacking, achieving state-of-the-art semantic alignment, generation diversity, and aesthetic quality without relying on external black-box models. Ultimately, Oracle Noise unlocks the true structural potential of the initial latent canvas, paving a principled path for highly interpretable generative control.
\appendix

\section{Additional Experimental Results}
\begin{table}[H]
  \centering
  \caption{\textbf{Quantitative comparison of different initial noise distributions on SDXL.} We evaluate Gaussian noise, Golden Noise, and our proposed Oracle Noise across Pick-a-Pic and DrawBench datasets. Results show that Oracle Noise achieves superior performance in aesthetic quality and competitive alignment scores compared to established baselines.}
  \label{tab:golden_vs_oracle_updated}
  \small
  \setlength{\tabcolsep}{3.2pt} 
  \begin{tabular}{clccccc}
    \toprule
    \textbf{Dataset} & \textbf{Method} & \textbf{HPS$\uparrow$} & \textbf{IR$\uparrow$} & \textbf{Pick$\uparrow$} & \textbf{Aes.$\uparrow$} & \textbf{CLIP$\uparrow$} \\
    \midrule
    
    \multirow{3}{*}{Pick-a-Pic} 
    & Gaussian & 27.88 & 0.58 & 17.73 & 6.03 & 71.83 \\
    & Golden & \textbf{28.08} & \textbf{0.76} & \textbf{17.90} & 6.05 & \textbf{73.87} \\
    & \textbf{Oracle (Ours)} & 27.97 & 0.67 & 17.85 & \textbf{6.13} & 73.00 \\
    \midrule
    
    \multirow{3}{*}{DrawBench} 
    & Gaussian & 28.07 & 0.51 & 19.91 & 5.43 & 69.79 \\
    & Golden & \textbf{28.49} & \textbf{0.58} & 19.98 & 5.44 & \textbf{70.55} \\
    & \textbf{Oracle (Ours)} & 28.25 & 0.57 & \textbf{20.02} & \textbf{5.48} & 70.21 \\
    
    \bottomrule
  \end{tabular}
\end{table}
\subsection{Comparison with Golden Noise}
We compare Oracle Noise with Golden Noise \cite{zhou2025goldennoisediffusionmodels}, a learned noise prior, in Table~\ref{tab:golden_vs_oracle_updated}. While Golden Noise sets an upper bound for semantic alignment (CLIP Score), it often induces subtle distribution shifts due to its aggressive optimization. In contrast, Oracle Noise consistently achieves superior Aesthetics scores (6.13 on Pick-a-Pic) across all benchmarks.

This supports our theoretical claim that strict spherical geodesic updates preserve the native Gaussian prior, preventing the over-saturation and artifacts typical of learned priors. Notably, Oracle Noise delivers these results in a zero-shot, training-free manner, offering a more efficient and prior-preserving alternative for real-world inference.

\begin{table}[H]
\centering
\caption{Ablation of CFG-Aware optimization on Pickapic under SDXL.}
\label{tab:cfg_aware_ablation}
\small
\setlength{\tabcolsep}{4pt} 
\begin{tabular}{lccccc}
\toprule
Method & HPS$\uparrow$ & IR$\uparrow$ & Pick$\uparrow$ & Aes.$\uparrow$ & CLIP$\uparrow$ \\
\midrule
Gaussian & 27.88 & 0.58 & 17.73 & 6.03 & 71.83 \\
Oracle (Cond) & 27.86 & 0.55 & 17.76 & 6.08 & 70.91 \\
\textbf{Oracle (CFG aware)} & \textbf{27.97} & \textbf{0.67} & \textbf{17.85} & \textbf{6.13} & \textbf{73.00} \\
\bottomrule
\end{tabular}
\end{table}

\subsection{Impact of CFG-Aware Optimization}The results in Table~\ref{tab:cfg_aware_ablation} reveal a critical optimization-inference mismatch in noise initialization. At a standard guidance scale ($s=7.5$), the \textit{Cond-only} variant surprisingly performs worse than the \textit{Gaussian} baseline in HPSv2 and CLIP Score. This degradation occurs because CFG inference relies on the contrast between conditional and unconditional branches; by only optimizing the former, the latent fails to account for background noise amplification during extrapolation, leading to over-saturation. In contrast, our \textbf{CFG-Aware} objective simulates the full inference dynamics during optimization, resolving this mismatch.As shown, it not only recovers the performance loss but significantly boosts the CLIP Score to 73.00 and ImageReward to 0.67. This demonstrates that CFG-awareness is essential for ensuring that pre-reverse-diffusion alignment remains robust under high guidance scales.

\section{Visualization of Multi-Encoder Token Weighting}
To provide a more intuitive understanding of the semantic routing mechanism introduced in Section~\ref{subsec:token_weighting}, Figure~\ref{fig:token_weighting_appendix} illustrates the step-by-step process of computing the token importance scores based on representational collapse. 

\begin{figure}[t]
\centering

\begin{subfigure}[t]{0.49\linewidth}
\vspace{0pt}
\centering
\includegraphics[height=0.30\textheight,width=\linewidth,keepaspectratio]{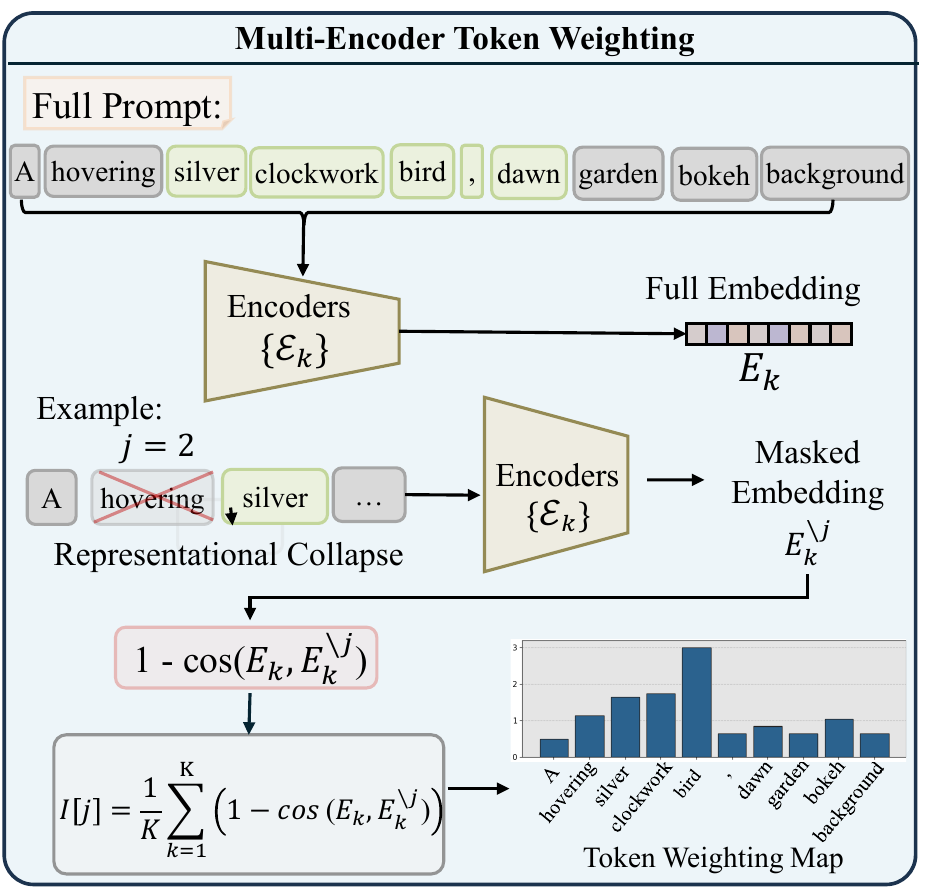}
\caption{Multi-Encoder Token Weighting.}
\label{fig:token_weighting_appendix}
\end{subfigure}
\hfill
\begin{subfigure}[t]{0.49\linewidth}
\vspace{0pt}
\centering
\includegraphics[height=0.30\textheight,width=\linewidth,keepaspectratio]{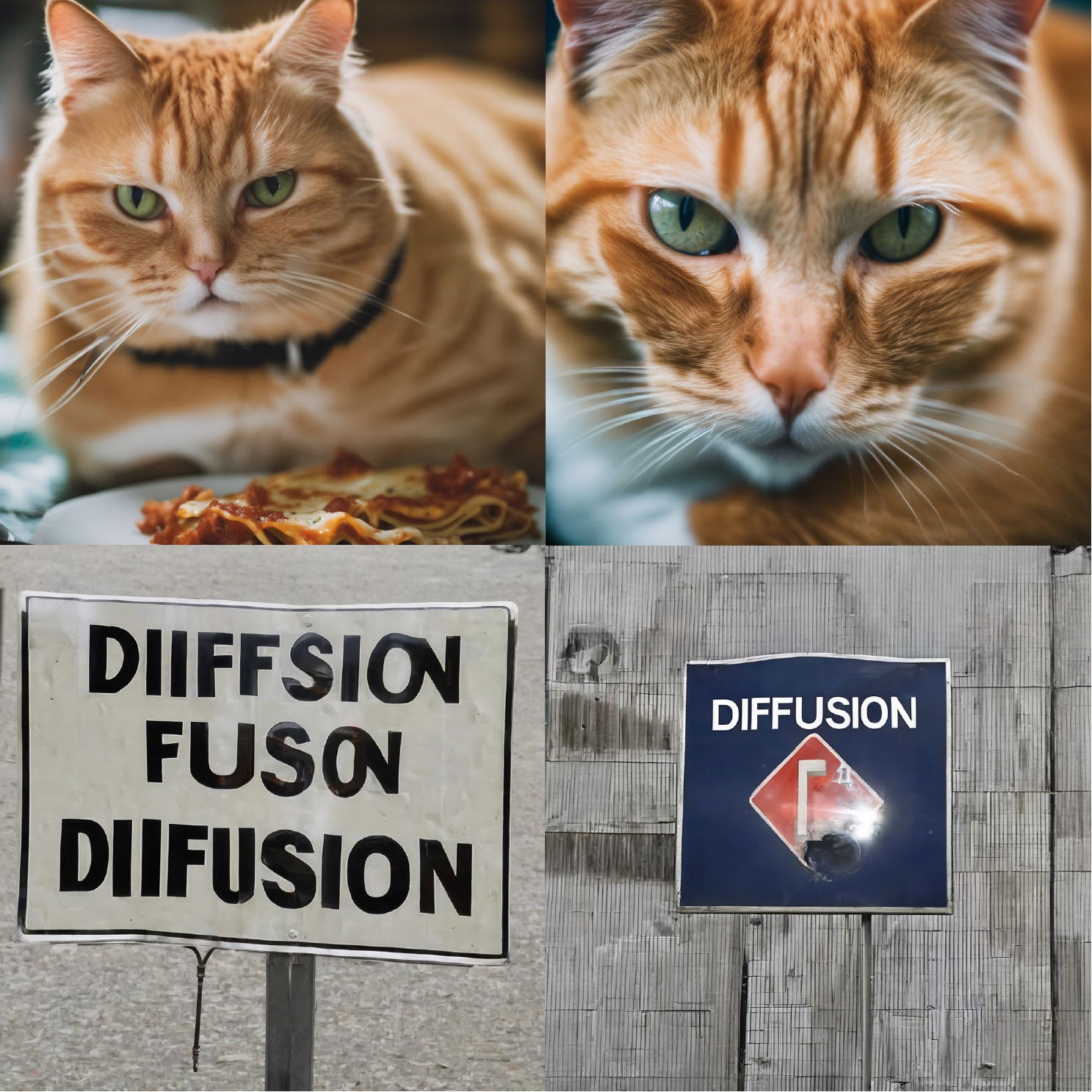}
\caption{Qualitative failure case.}
\label{fig:failure_case}
\end{subfigure}

\vspace{-0.4em}
\caption{\textbf{Analysis of token weighting and failure modes.}
Left: Multi-Encoder Token Weighting identifies high-value visual entities by masking individual tokens and measuring the embedding change, assigning higher optimization weights to core entities such as ``bird.'' Right: qualitative failure cases of Oracle Noise. The top row shows semantic suppression, where over-emphasizing the dominant ``cat'' token weakens or removes the secondary ``lasagna'' object. The bottom row illustrates text-rendering failures, where generated typography remains brittle and may exhibit misspellings or distorted layouts despite improved visual realism.}
\label{fig:appendix_analysis}

\end{figure}

\section{Failure Case Analysis: Semantic Suppression}

The contrast in Figure~\ref{fig:failure_case} reveals a critical boundary of our framework. While the Gaussian baseline maintains a balanced (though low-fidelity) layout, Oracle Noise may occasionally over-route optimization energy toward a dominant entity. 

A logical question arises: why does the same prompt yield successful results in Figure 4 but exhibit semantic suppression here? We attribute this variance to the \textbf{stochastic nature of noise initialization} and architectural sensitivity. In the successful cases of Figure 4, the initial random seed $z_T$ resided in a region of the latent manifold where the gradients for "cat" and "lasagna" were not mutually exclusive, allowing for balanced co-existence. However, in Figure~\ref{fig:failure_case}, the chosen seed sits in a trajectory where the "cat" token’s representational energy is disproportionately high. This creates an \textbf{"attention sink"} effect: the optimization process finds a much sharper gradient for the dominant subject, causing it to converge into a state that satisfies the primary objective so aggressively that the weaker "lasagna" signals are washed out. This comparison proves that while Oracle Noise is a powerful tool for structural alignment, its final output remains an interplay between the initial seed and the inherent biases of the text encoder.

\section{Extended Theoretical Analysis and Proofs}
\label{sec:appendix_theory}

In this section, we provide the mathematical formulations for the latent optimization dynamics. We establish the geometric necessity of Riemannian optimization by proving the orthogonality of scale-invariant gradients, deriving the exact exponential map on the hypersphere, bounding the Wasserstein distance for prior equivalence, and characterizing the geodesic overshoot via Taylor expansion.

\subsection{Gradient Orthogonality}
\label{app:gradient_orthogonality}

We first define the objective function $\mathcal{L}$ under the structural normalization constraints of the diffusion architecture \cite{rombach2022highresolutionimagesynthesislatent}.

\begin{lemma}
\label{lemma:orthogonality}
Let $\mathcal{L}: \mathbb{R}^D \setminus \{0\} \to \mathbb{R}$ be an objective function. If $\mathcal{L}$ is composed strictly of linear transformations followed by zero-mean normalizations (e.g., LayerNorm or GroupNorm), then $\mathcal{L}$ is a degree-zero homogeneous function. Consequently, its gradient $\nabla \mathcal{L}(z)$ is orthogonal to the input $z$:
\begin{equation}
    \langle z, \nabla \mathcal{L}(z) \rangle = 0
\end{equation}
\end{lemma}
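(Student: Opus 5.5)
The plan has two parts. First I show that the architectural hypothesis forces $\mathcal{L}(cz)=\mathcal{L}(z)$ for every $c>0$. Then I differentiate along the ray through $z$ to get the orthogonality, which is Euler's homogeneous function theorem with degree zero.

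\textbf{Step 1 (scale invariance).} I will read "composed strictly of linear transformations followed by zero-mean normalizations" as follows:
\begin{equation}
\mathcal{L}=F\circ N\circ W,
\end{equation}
where $W$ is linear (bias-free), $N(u)=(u-\mu(u)\mathbf{1})/\sigma(u)$ acts groupwise, and $F$ is an arbitrary downstream map. This covers the cross-attention logits, the softmax and the weighted sum $\sum\alpha_l\tilde{\mathbf{A}}_l\mathbf{M}$. For $c>0$, linearity gives $W(cz)=cW(z)$. The mean and the standard deviation are positively $1$-homogeneous, so $\mu(cu)=c\mu(u)$ and $\sigma(cu)=c\sigma(u)$, and therefore $N(cu)=N(u)$. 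Everything after $N$ sees an identical input, so $\mathcal{L}(cz)=\mathcal{L}(z)$.

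This step is where I expect trouble, because the hypothesis must be made precise for the statement to hold. Three conditions are needed:
\begin{itemize}[noitemsep]
\item The normalization must sit on the input path before any nonlinearity. Biases or affine shifts inserted before $N$ would break homogeneity, although the learned affine parameters applied \emph{after} $N$ are harmless, since they belong to $F$.
\item The stabilizing $\varepsilon$ in $\sqrt{\sigma^2+\varepsilon}$ must be dropped.
\item We need $\sigma(Wz)\neq0$, so that $N$ is defined and differentiable on $\mathbb{R}^D\setminus\{0\}$. This is where the domain restriction in the lemma is used.
\end{itemize}
I will state these as the standing idealization, consistent with the "approximately degree-zero" remark in the proof of Theorem~\ref{thm:euclidean_degradation}. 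The equalities $\mathcal{L}(cz)=\mathcal{L}(z)$ and the resulting orthogonality then hold exactly under this idealization, and only approximately for the real network.

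\textbf{Step 2 (differentiation).} Fix $z\neq0$ and define $h(c)=\mathcal{L}(cz)$ for $c>0$. By Step 1, $h$ is constant. Assuming $\mathcal{L}$ is differentiable at $z$, the chain rule gives
\begin{equation}
0=h'(1)=\langle \nabla\mathcal{L}(z),z\rangle .
\end{equation}
This is Euler's identity $\langle z,\nabla\mathcal{L}(z)\rangle=k\,\mathcal{L}(z)$ with $k=0$.

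\textbf{Remark on differentiability.} Differentiability follows from composition of smooth maps wherever $\sigma>0$. As a sanity check, I would directly verify the Jacobian identity $J_N(u)\,u=0$ for a single normalization block. This is just the derivative of $N(cu)=N(u)$ at $c=1$, and it shows the radial direction lies in the kernel of the first normalization layer's Jacobian. That gives an independent, layer-local confirmation of the orthogonality.
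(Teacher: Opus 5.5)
Your proposal is correct and follows the paper's proof essentially step for step: the paper also treats $\mathcal{L}$ as a function of $N(Wz)$, uses $\mu(cu)=c\mu(u)$ and $\sigma(cu)=c\sigma(u)$ to get $\mathcal{L}(cz)=\mathcal{L}(z)$ for $c>0$, and differentiates $c\mapsto\mathcal{L}(cz)$ at $c=1$. Your extra hypotheses (bias-free $W$, no $\varepsilon$, $\sigma(Wz)\neq 0$, differentiability) are idealizations the paper uses without stating them; note that $\sigma(Wz)\neq 0$ is a genuinely additional assumption rather than a consequence of the domain, since $z\neq 0$ alone does not guarantee it (e.g.\ $z\in\ker W$).
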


\begin{proof}
Let $f(z) = Wz$ be the linear feature transformation. For any scalar $c > 0$, the normalization operator $N(\cdot)$ acts as follows:
\begin{equation*}
    N(cf(z)) = \frac{cf(z) - \mu(cf(z))}{\sigma(cf(z))} = \frac{c(f(z) - \mu(f(z)))}{c\sigma(f(z))} = N(f(z))
\end{equation*}
Since $\mathcal{L}$ is defined upon $N(f(z))$, it holds that $\mathcal{L}(cz) = \mathcal{L}(z)$ for all $c > 0$. Differentiating both sides with respect to $c$ using the chain rule yields:
\begin{equation}
    \frac{d}{dc} \mathcal{L}(cz) = \nabla \mathcal{L}(cz)^T z
\end{equation}
Evaluating this derivative at $c = 1$, since $\mathcal{L}(cz)$ is constant with respect to $c$, we obtain:
\begin{equation}
    \left. \frac{d}{dc} \mathcal{L}(cz) \right|_{c=1} = \nabla \mathcal{L}(z)^T z = 0
\end{equation}
Thus, $\langle z, \nabla \mathcal{L}(z) \rangle = 0$.
\end{proof}

\begin{theorem}
\label{thm:euclidean_degradation_app}
An unconstrained Euclidean gradient ascent step $z_{k+1} = z_k + \eta g$ strictly inflates the $\ell_2$-norm of the latent vector, causing divergence from the initial hypersphere \cite{Vershynin_2018}.
\end{theorem}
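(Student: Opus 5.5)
We expand the squared norm of the updated iterate and control the cross term with Lemma~\ref{lemma:orthogonality}. Write $g = \nabla\mathcal{L}(z_k)$ and expand:
\begin{equation*}
    \|z_{k+1}\|^2 = \|z_k\|^2 + 2\eta\langle z_k, g\rangle + \eta^2\|g\|^2 .
\end{equation*}
Under the standing assumption that $\mathcal{L}$ is built from linear maps followed by zero-mean normalizations, Lemma~\ref{lemma:orthogonality} gives $\langle z_k, g\rangle = 0$. This leaves
\begin{equation*}
    \|z_{k+1}\|^2 = \|z_k\|^2 + \eta^2\|g\|^2 .
\end{equation*}
Since $\eta>0$, the norm increases strictly whenever $g\neq\mathbf{0}$, which is the non-triviality hypothesis already used in Theorem~\ref{thm:euclidean_degradation}. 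Geometrically, $z_k$ and $\eta g$ are orthogonal, so by Pythagoras $z_{k+1}$ lies strictly outside the sphere of radius $\|z_k\|$.

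To make the conclusion about divergence from the initial hypersphere precise, we iterate. Starting from $z_0\in\mathbb{S}^{D-1}(\sqrt{D})$ and applying the identity at each step (the lemma holds at every nonzero point), we telescope to
\begin{equation*}
    \|z_k\|^2 = D + \eta^2\sum_{i<k}\|g_i\|^2 .
\end{equation*}
This sum is strictly increasing in $k$. We then interpret it through Theorem~\ref{thm:annulus}. The statistic $\|z\|/\sqrt{D}$ concentrates at $1$, whereas $\|z_k\|/\sqrt{D}$ is at least
\begin{equation*}
    \sqrt{1+\eta^2\sum_{i<k}\|g_i\|^2/D} .
\end{equation*}
Once this exceeds $1+\epsilon$, the iterate leaves the $\epsilon$-annulus containing the typical set.

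The step I expect to be the main obstacle is justifying the hypothesis of Lemma~\ref{lemma:orthogonality} for the actual objective. The real network adds timestep embeddings, biases and convolution padding before the first GroupNorm, and the text keys in cross-attention do not depend on $z$. As a result, $\mathcal{L}$ is only approximately degree-zero homogeneous.

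I would handle this by stating the theorem under the exact hypothesis of the lemma, giving strict inflation. I would then add a perturbative remark: if $|\langle z_k,g\rangle| \le \delta\|z_k\|\|g\|$, inflation still holds as long as
\begin{equation*}
    \eta\|g\| > 2\delta\|z_k\| .
\end{equation*}
Separately, I would note that strict inflation by $\eta^2\|g\|^2$ can be negligible relative to $D$ for small $\eta$. The statement therefore concerns monotone drift, not an immediate exit from the annulus. Exit is reached after accumulation, or with the aggressive step sizes discussed in Section~\ref{subsec:spherical_optimization}.
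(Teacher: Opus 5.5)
Your proposal is correct and follows the paper's proof: you expand $\|z_k+\eta g\|^2$, eliminate the cross term with Lemma~\ref{lemma:orthogonality}, and read off $\|z_{k+1}\|^2=\|z_k\|^2+\eta^2\|g\|^2>\|z_k\|^2$. Your additions are sound refinements that the paper's one-step proof omits or leaves implicit: the telescoped identity $\|z_k\|^2=D+\eta^2\sum_{i<k}\|g_i\|^2$ (strict monotonicity needs $g_i\neq\mathbf{0}$ at every iterate), the explicit assumption $g\neq\mathbf{0}$, and the perturbative condition $\eta\|g\|>2\delta\|z_k\|$.
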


\begin{proof}
The squared $\ell_2$-norm of the updated latent is:
\begin{equation}
    \|z_{k+1}\|^2 = \|z_k + \eta g\|^2 = \|z_k\|^2 + 2\eta \langle z_k, g \rangle + \eta^2 \|g\|^2
\end{equation}
By Lemma~\ref{lemma:orthogonality}, $\langle z_k, g \rangle = 0$. We have:
\begin{equation}
    \|z_{k+1}\|^2 = \|z_k\|^2 + \eta^2 \|g\|^2 > \|z_k\|^2
\end{equation}
Therefore, the latent strictly escapes its native manifold $\mathbb{S}^{D-1}(\|z_0\|)$.
\end{proof}

\subsection{Riemannian Exponential Map}
\label{app:riemannian_exp_map}

To preserve the latent norm, optimization must be restricted to the Riemannian manifold $\mathcal{M} = \mathbb{S}^{D-1}(R)$, where $R = \|z_k\|$.

Let $T_{z_k} \mathcal{M} = \{v \in \mathbb{R}^D : \langle z_k, v \rangle = 0\}$ be the tangent space at $z_k$. The projection of the Euclidean gradient $g$ onto $T_{z_k} \mathcal{M}$ is:
\begin{equation}
    g_\perp = g - \frac{\langle z_k, g \rangle}{R^2} z_k
\end{equation}
\textit{(Note: While Lemma~\ref{lemma:orthogonality} implies $g_\perp = g$, we retain the projection for strictness under minor architectural residuals).}

\paragraph{Derivation of the Geodesic Step.} The steepest ascent curve starting from $z_k$ with velocity $v \in T_{z_k} \mathcal{M}$ is defined by the exponential map $\exp_{z_k}(v)$. On a hypersphere, geodesics are great circles. The great circle passing through $z_k$ in the direction of $v$ lies in the 2D plane spanned by the orthogonal vectors $z_k$ and $v$.

Let $u = \frac{v}{\|v\|}$ be the unit tangent vector. The curve parameterized by arc length $s$ is:
\begin{equation}
    \gamma(s) = z_k \cos\left(\frac{s}{R}\right) + R u \sin\left(\frac{s}{R}\right)
\end{equation}
Setting the ascent vector $v = \eta g_\perp$ and substituting $s = \|v\| = \eta \|g_\perp\|$, we obtain the exact update rule:
\begin{equation}
    z_{k+1} = \exp_{z_k}(\eta g_\perp) = z_k \cos\left( \eta \frac{\|g_\perp\|}{R} \right) + R \frac{g_\perp}{\|g_\perp\|} \sin\left( \eta \frac{\|g_\perp\|}{R} \right)
\end{equation}

\subsection{Prior Equivalence}
\label{app:wasserstein_equivalence}

We prove that restricting the distribution to the hypersphere introduces negligible distribution shift from the standard Gaussian prior $\mathcal{P}_G = \mathcal{N}(\mathbf{0}, \mathbf{I}_D)$.

\begin{theorem}
\label{thm:wasserstein}
Let $\mathcal{P}_S = \mathcal{U}(\mathbb{S}^{D-1}(\sqrt{D}))$ be the uniform distribution on the hypersphere. As $D \to \infty$, the 2-Wasserstein distance $\mathcal{W}_2(\mathcal{P}_G, \mathcal{P}_S)$ asymptotically vanishes relative to the expected norm \cite{Vershynin_2018, davidson2022hypersphericalvariationalautoencoders}.
\end{theorem}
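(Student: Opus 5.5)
We bound $\mathcal{W}_2(\mathcal{P}_G,\mathcal{P}_S)$ by an explicit coupling, the radial projection. Take $z \sim \mathcal{P}_G$ and set $T(z) = \sqrt{D}\, z/\|z\|$, which is well defined almost surely. By rotational invariance of the standard Gaussian, $z/\|z\|$ is uniform on $\mathbb{S}^{D-1}$ and independent of $\|z\|$. Hence $T(z) \sim \mathcal{P}_S$, and $(z, T(z))$ is a valid coupling of the two measures. Because $z$ and $T(z)$ are collinear, the transport cost is purely radial:
\begin{equation*}
    \|z - T(z)\| = \bigl|\, \|z\| - \sqrt{D} \,\bigr|,
\end{equation*}
and therefore
\begin{equation*}
    \mathcal{W}_2^2(\mathcal{P}_G,\mathcal{P}_S) \le \mathbb{E}\bigl[(\|z\|-\sqrt{D})^2\bigr] = \mathbb{E}\|z\|^2 - 2\sqrt{D}\,\mathbb{E}\|z\| + D = 2D - 2\sqrt{D}\,\mathbb{E}\|z\|.
\end{equation*}

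The remaining step is to show $\mathbb{E}\|z\| = \sqrt{D} - O(D^{-1/2})$. The cleanest route uses the chi distribution: $\mathbb{E}\|z\| = \sqrt{2}\,\Gamma\bigl(\tfrac{D+1}{2}\bigr)/\Gamma\bigl(\tfrac{D}{2}\bigr)$, and the standard Gautschi/Wendel-type inequalities for Gamma ratios give $\sqrt{D} - \mathbb{E}\|z\| \le \tfrac{1}{2\sqrt{D}}$ up to a constant. An alternative avoids special functions. Set $X = \|z\|^2 \sim \chi^2(D)$, with mean $D$ and variance $2D$, and use $(\sqrt{X}-\sqrt{D})^2 = (X-D)^2/(\sqrt{X}+\sqrt{D})^2 \le (X-D)^2/D$. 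Taking expectations gives $\mathbb{E}[(\|z\|-\sqrt{D})^2] \le 2$ directly. This is the route I would present, since it is elementary and yields the explicit dimension-free bound $\mathcal{W}_2(\mathcal{P}_G,\mathcal{P}_S) \le \sqrt{2}$. It is consistent with the concentration behind Theorem~\ref{thm:annulus}.

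Dividing by the expected norm, which is $\sqrt{D}$ to leading order (or $\sqrt{\mathbb{E}\|z\|^2} = \sqrt{D}$ exactly), gives
\begin{equation*}
    \frac{\mathcal{W}_2(\mathcal{P}_G,\mathcal{P}_S)}{\sqrt{D}} \le \sqrt{\frac{2}{D}} \xrightarrow{D\to\infty} 0,
\end{equation*}
which is the claimed relative vanishing. The main obstacle is interpretive rather than technical. The absolute distance $\mathcal{W}_2$ does \emph{not} tend to zero: it converges to a positive constant, namely $1/\sqrt{2}$, because $\|z\|-\sqrt{D}$ is asymptotically $\mathcal{N}(0,1/2)$. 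So the statement must be read, as written, relative to the norm scale. I would add a remark that the radial-projection coupling is essentially optimal, since any coupling must transport the radial marginal $\chi(D)$ onto the point mass at $\sqrt{D}$. This means the $O(1)$ absolute gap is genuine, and the claim of ``negligible shift'' rests on the $O(D^{-1/2})$ relative rate.
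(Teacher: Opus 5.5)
Your proof is correct. It shares the paper's skeleton: the same radial-projection coupling $T(Z)=\sqrt{D}\,\Theta$ and the same reduction to the transport cost $2D - 2\sqrt{D}\,\mathbb{E}\|Z\|$.

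The difference is in the final estimate. The paper expands the chi mean $\sqrt{2}\,\Gamma((D+1)/2)/\Gamma(D/2)$ by Stirling's series, which gives the sharp asymptotic bound $\mathcal{W}_2^2 \le \tfrac{1}{2} + \mathcal{O}(D^{-1})$. You instead use $(\sqrt{X}-\sqrt{D})^2 \le (X-D)^2/D$ together with $\mathrm{Var}(X)=2D$ for $X\sim\chi^2(D)$. This loses the constant ($2$ instead of $\tfrac12$), but it gives a non-asymptotic, dimension-free bound valid for every $D$, with no remainder term to control.

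The same inequality also handles the denominator without special functions. It gives $\mathbb{E}\|Z\| \ge \sqrt{D} - 1/\sqrt{D}$, and Jensen gives $\mathbb{E}\|Z\| \le \sqrt{D}$. So dividing by the expected norm rather than by $\sqrt{D}$ changes nothing in the limit. You should state this explicitly rather than appealing to leading-order behaviour.

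Your closing remark goes beyond the paper, which only proves an upper bound. Any $Y$ on the sphere satisfies $\|Z-Y\| \ge \bigl|\,\|Z\|-\sqrt{D}\,\bigr|$, so the radial coupling is exactly optimal. Hence $\mathcal{W}_2^2 = \mathbb{E}[(\|Z\|-\sqrt{D})^2] = \tfrac12 + \mathcal{O}(D^{-1})$ by the paper's computation, and $\mathcal{W}_2 \to 1/\sqrt{2}$. This confirms that the argument delivers only the relative statement, not a literal ``asymptotic isomorphism.''
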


\begin{proof}
Let $Z \sim \mathcal{P}_G$. We can decouple $Z$ as $Z = R_G \Theta$, where $R_G^2 \sim \chi^2(D)$ and $\Theta \sim \mathcal{U}(\mathbb{S}^{D-1}(1))$. We define a deterministic transport map $T: \mathcal{P}_G \to \mathcal{P}_S$ as $T(Z) = \sqrt{D} \Theta$.

\noindent The squared 2-Wasserstein distance is bounded by the cost:
\begin{equation}
    \mathcal{W}_2^2(\mathcal{P}_G, \mathcal{P}_S) \le \mathbb{E}_{Z \sim \mathcal{P}_G} \left[ \|Z - T(Z)\|^2 \right] = \mathbb{E}_{R_G} \left[ (R_G - \sqrt{D})^2 \right]
\end{equation}
Expanding the quadratic term:
\begin{equation}
    \mathbb{E}[(R_G - \sqrt{D})^2] = \mathbb{E}[R_G^2] - 2\sqrt{D}\mathbb{E}[R_G] + D
\end{equation}
By the properties of the Chi-squared distribution, $\mathbb{E}[R_G^2] = D$. The expectation of the Chi distribution is $\mathbb{E}[R_G] = \sqrt{2} \frac{\Gamma((D+1)/2)}{\Gamma(D/2)}$. Using the asymptotic expansion of the Gamma function (Stirling's series) for large $D$:
\begin{equation}
    \mathbb{E}[R_G] = \sqrt{D} \left( 1 - \frac{1}{4D} + \mathcal{O}(D^{-2}) \right)
\end{equation}
Substituting this back into the transport cost:
\begin{equation*}
    \mathcal{W}_2^2(\mathcal{P}_G, \mathcal{P}_S) \le 2D - 2\sqrt{D} \left[ \sqrt{D} - \frac{1}{4\sqrt{D}} + \mathcal{O}(D^{-3/2}) \right] = \frac{1}{2} + \mathcal{O}(D^{-1})
\end{equation*}
Evaluating the relative geometric deviation:
\begin{equation}
    \lim_{D \to \infty} \frac{\mathcal{W}_2(\mathcal{P}_G, \mathcal{P}_S)}{\mathbb{E}[\|Z\|]} \le \lim_{D \to \infty} \frac{\sqrt{1/2}}{\sqrt{D}} = 0
\end{equation}
Thus, $\mathcal{P}_S$ is asymptotically isomorphic to $\mathcal{P}_G$ in high dimensions.
\end{proof}

\subsection{Geodesic Overshoot}
\label{app:geodesic_overshoot}

We mathematically characterize the degradation of the objective function under excessively large angular step sizes $\eta$.

Let $\gamma(\eta) = z_0 \cos \eta + u \sin \eta$ denote the unit-speed geodesic, where $u = \|z_0\| \frac{g_\perp}{\|g_\perp\|}$. We evaluate the objective function $\mathcal{L}$ along $\gamma$ using a second-order Taylor expansion around $\eta = 0$:
\begin{equation}
    \gamma(\eta) = z_0 + \eta u - \frac{\eta^2}{2} z_0 + \mathcal{O}(\eta^3)
\end{equation}
Expanding $\mathcal{L}(\gamma(\eta))$ yields:
\begin{equation}
\begin{split}
    \mathcal{L}(\gamma(\eta)) &= \mathcal{L}(z_0) + \nabla \mathcal{L}(z_0)^T (\gamma(\eta) - z_0) \\
    &\quad + \frac{1}{2} (\gamma(\eta) - z_0)^T \mathbf{H}_{\mathcal{L}} (\gamma(\eta) - z_0) + \mathcal{O}(\eta^3)
\end{split}
\end{equation}
where $\mathbf{H}_{\mathcal{L}}$ is the Hessian of $\mathcal{L}$ evaluated at $z_0$. Substituting the expansion of $\gamma(\eta)$:
\begin{equation}
    \mathcal{L}(\gamma(\eta)) \approx \mathcal{L}(z_0) + \nabla \mathcal{L}(z_0)^T \left( \eta u - \frac{\eta^2}{2} z_0 \right) + \frac{1}{2} (\eta u)^T \mathbf{H}_{\mathcal{L}} (\eta u)
\end{equation}
By Lemma~\ref{lemma:orthogonality}, $\nabla \mathcal{L}(z_0)^T z_0 = 0$. The expansion simplifies  to:
\begin{equation}
    \mathcal{L}(\gamma(\eta)) \approx \mathcal{L}(z_0) + \eta \langle g, u \rangle + \frac{\eta^2}{2} u^T \mathbf{H}_{\mathcal{L}} u
\end{equation}

This expansion reveals the fundamental dynamics of the geodesic step. Because the update direction $u$ is constructed to be parallel to $g_\perp$, the first-order term $\eta \langle g, u \rangle > 0$ provides a strict positive gain that continuously drives the optimization forward. However, as the latent trajectory approaches an optimal semantic basin (i.e., a local maximum), the objective landscape becomes concave. In this region, the Hessian $\mathbf{H}_{\mathcal{L}}$ becomes negative definite along $u$, causing the second-order term $u^T \mathbf{H}_{\mathcal{L}} u < 0$ to emerge as a curvature-induced penalty that fundamentally resists the gradient ascent.

The phenomenon of ``geodesic overshoot'' occurs when the step size $\eta$ exceeds the critical threshold where the curvature penalty eclipses the gradient gain:
\begin{equation}
    \eta > -\frac{2 \langle g, u \rangle}{u^T \mathbf{H}_{\mathcal{L}} u}
\end{equation}
Beyond this threshold, the optimization trajectory physically wraps around the manifold's curvature, pointing the update vector away from the target semantic basin, necessitating the empirical bound of $\eta$ within the linear regime.

\section{Explanations of Models, Metrics, and Datasets}\subsection{Diffusion Models}\begin{itemize}\item \textbf{Stable Diffusion XL (SDXL) \cite{podell2023sdxlimprovinglatentdiffusion}:} An advanced iteration of the Stable Diffusion architecture featuring a significantly larger UNet backbone and a dual text-encoder setup (combining CLIP ViT-L and ViT-G). It enables high-resolution generation and demonstrates vastly improved prompt adherence and compositional capabilities.\item \textbf{SDXL-Turbo \cite{sauer2023adversarialdiffusiondistillation}:} A distilled, highly efficient variant of SDXL that leverages Adversarial Diffusion Distillation (ADD). It is designed to produce high-quality text-to-image outputs in a single or very few denoising steps.\item \textbf{Stable Diffusion 3.5 Medium (SD3.5-M) \cite{esser2024scalingrectifiedflowtransformers}:} A recent generative model utilizing a Multimodal Diffusion Transformer (MMDiT) architecture paired with rectified flow matching. It is optimized to balance high-quality semantic alignment with accessibility for consumer-grade hardware.\end{itemize}\subsection{Evaluation Metrics}\begin{itemize}\item \textbf{FID (Fr'echet Inception Distance) \cite{Seitzer2020FID}:} A standard quantitative metric evaluating overall image quality and distributional similarity. It measures the Wasserstein-2 distance between the feature representations of generated images and a real reference dataset. Lower scores indicate higher realism.\item \textbf{Vendi Score \cite{friedman2023vendiscorediversityevaluation}:} An evaluation metric designed to quantify the diversity of a generated set of images. It estimates the effective number of distinct modes in the generated distribution.\item \textbf{CLIP Score \cite{radford2021learningtransferablevisualmodels}:} A metric measuring the semantic alignment between a generated image and its corresponding text prompt. It is computed as the cosine similarity between the image and text embeddings.\item \textbf{HPSv2 (Human Preference Score v2) \cite{wu2023human}:} A specialized reward model trained on a large-scale dataset of human choices. It predicts human preference regarding text-image semantic alignment and visual appeal.\item \textbf{ImageReward \cite{xu2023imagerewardlearningevaluatinghuman}:} A general-purpose text-to-image human preference reward model trained on expert annotations. It addresses issues like anatomical correctness and artifacts.\item \textbf{PickScore \cite{kirstain2023pickapicopendatasetuser}:} An evaluation metric based on the Pick-a-Pic dataset, trained to predict human choices in side-by-side generative comparisons.\item \textbf{Aesthetics Score \cite{schuhmann2022laion5bopenlargescaledataset}:} A metric typically utilizing a linear classifier trained on CLIP embeddings from human-rated datasets. It predicts the visual and artistic appeal of an image.\end{itemize}\subsection{Benchmarks and Datasets}\begin{itemize}\item \textbf{MS-COCO 2017 \cite{lin2015microsoftcococommonobjects}:} A large-scale, widely-used dataset originally designed for object detection and captioning. In generative research, it is standardly used to test zero-shot generation fidelity.\item \textbf{DrawBench \cite{saharia2022photorealistic}:} A challenging, curated benchmark consisting of complex text prompts. It is designed to stress-test compositionality, spatial relations, and text rendering.\item \textbf{GenEval \cite{ghosh2023genevalobjectfocusedframeworkevaluating}:} An object-focused evaluation framework and dataset for testing fine-grained compositional reasoning in generative models.\item \textbf{Pick-a-Pic \cite{kirstain2023pickapicopendatasetuser}:} A large-scale, open dataset of human preferences for text-to-image generation collected via crowdsourced selections.\end{itemize}\section{Limitations and Future Work}\label{sec:limitations}While our proposed Oracle Noise framework effectively addresses the geometric degradation and semantic misallocation in latent initialization, it is not without limitations. Currently, the optimal optimization hyperparameters---specifically the angular step size $\eta$ and the total number of iterations $N$---are highly dependent on the semantic complexity of the input prompt. We empirically observe that applying fixed hyperparameter configurations across all prompts can lead to sub-optimal results. For instance, structurally simple prompts require minimal optimization energy; applying a large $\eta$ or $N$ in these cases often induces ``geodesic overshoot,'' where the latent trajectory physically bypasses the optimal semantic basin and introduces unnatural artifacts or degrades structural integrity. Conversely, highly complex prompts involving multiple subjects and intricate spatial relationships demand prolonged, fine-grained refinement, making a conservative static setting insufficient for perfect semantic alignment.Therefore, a highly promising avenue for future work is the development of an adaptive, prompt-aware scheduling mechanism. Rather than relying on fixed parameters, future iterations of this framework could dynamically modulate the step size and iteration count based on real-time optimization feedback, such as attention entropy or the variance of the projected gradients on the hypersphere. Additionally, integrating an early-stopping criterion guided by an internal convergence metric could effectively prevent over-optimization without the overhead of external proxy models. Finally, we plan to extend the principles of prior-preserving spherical optimization beyond text-to-image synthesis, exploring its potential applications in enhancing temporal consistency for video diffusion models and structural conditioning in 3D generative frameworks.

\bibliography{software}
\bibliographystyle{plainnat}
\end{document}